\definecolor{DarkGreen}{rgb}{0,0.5,0}
\definecolor{DarkRed}{rgb}{0.75,0,0}
\definecolor{DarkOrange}{RGB}{255, 128, 0}
\definecolor{LightBlue}{RGB}{0, 190, 247}
\newcommand{\add}[1]{#1}
\theoremstyle{definition}
\newtheorem{theorem}{Theorem}
\newtheorem{remark}{Remark}
\newtheorem{assumption}{Assumption}
\newcommand{\bee}{\begin{enumerate} \itemsep -1pt \topsep -2pt}
\newcommand{\eee}{\end{enumerate}}
\Crefname{equation}{Equation}{Equations}
\newcommand{\bb}[1]{\boldsymbol{{#1}}}
\newcommand{\vect}[1]{\boldsymbol{{#1}}}
\newcommand{\dvect}[1]{\boldsymbol{{\dot{#1}}}}
\newcommand{\ddvect}[1]{\boldsymbol{{\ddot{#1}}}}
\newcommand{\dddvect}[1]{\boldsymbol{{\dddot{#1}}}}
\definecolor{MydarkRed}{RGB}{183,21,33}
\definecolor{MydarkGreen}{RGB}{59,143,50}
\definecolor{Myred}{RGB}{255,0,0}
\definecolor{MyorangeDarker}{RGB}{255,127,42}
\definecolor{MygreenDark}{RGB}{55,200,55}
\definecolor{MyorangeDark}{RGB}{255,212,42}
\definecolor{MyblueLight}{RGB}{85,221,255}
\definecolor{MylightGreen}{RGB}{161,255,191}
\definecolor{Myblue}{RGB}{0,0,255}
\newcommand{\tikzrectangle}[2][black,fill=red]{\tikz[baseline=0.0ex, line width=0.2mm]\draw[#1] [#1] (0,0) rectangle (0.2,0.2);}%
\newcommand{\tikzcircle}[2][red,fill=red]{\tikz[baseline=-0.5ex]\draw[#1,radius=#2] (0,0) circle ;}%
\newcommand{\argmin}{\mathop{\mbox{argmin}}}
\newcommand{\subparagraph}{}
\titlespacing{\section}{8pt}{7pt}{6pt}
\newcommand{\cfbox}[2]{%
    \colorlet{currentcolor}{.}%
    {\color{#1}%
    \fbox{\color{currentcolor}#2}}%
}
\newcommand{\costplots}{$\sum_{n=0}^{N-1}\left\Vert \mathbf{j}_{n}\right\Vert ^{2}dt + \rho T$, where $\rho=0.2$~m$^2$/s$^6$}
\definecolor{orcidlogocol}{HTML}{A6CE39}
\tikzset{
	orcidlogo/.pic={
		\fill[orcidlogocol] svg{M256,128c0,70.7-57.3,128-128,128C57.3,256,0,198.7,0,128C0,57.3,57.3,0,128,0C198.7,0,256,57.3,256,128z};
		\fill[white] svg{M86.3,186.2H70.9V79.1h15.4v48.4V186.2z}
		svg{M108.9,79.1h41.6c39.6,0,57,28.3,57,53.6c0,27.5-21.5,53.6-56.8,53.6h-41.8V79.1z M124.3,172.4h24.5c34.9,0,42.9-26.5,42.9-39.7c0-21.5-13.7-39.7-43.7-39.7h-23.7V172.4z}
		svg{M88.7,56.8c0,5.5-4.5,10.1-10.1,10.1c-5.6,0-10.1-4.6-10.1-10.1c0-5.6,4.5-10.1,10.1-10.1C84.2,46.7,88.7,51.3,88.7,56.8z};
	}
}
\newcommand\orcidicon[1]{\href{https://orcid.org/#1}{\mbox{\scalerel*{
				\begin{tikzpicture}[yscale=-1,transform shape]
					\pic{orcidlogo};
				\end{tikzpicture}
			}{|}}}}
\begin{document}
\title{\add{FASTER: Fast and Safe Trajectory Planner for Navigation in Unknown Environments}}

\author{Jesus~Tordesillas$^1$, %
        Brett~T.~Lopez$^2$, %
        Michael~Everett$^1$, %
        and~Jonathan~P.~How$^1$ %
\thanks{$^1$The authors are with the Aerospace Controls Laboratory, MIT, 77 Massachusetts Ave., Cambridge, MA, USA \tt\{jtorde, mfe, jhow\}@mit.edu}%
\thanks{$^2$The author is with the Jet Propulsion Laboratory, California Institute of Technology, 4800 Oak Grove Dr. Pasadena, CA. \tt brett.t.lopez@jpl.nasa.gov     }%
\thanks{Manuscript received in July 2020; revised XXXX, 2020.}}

\markboth{ }%
{Shell \MakeLowercase{\textit{et al.}}: Bare Demo of IEEEtran.cls for IEEE Journals}

\maketitle

\begin{tikzpicture}[overlay, remember picture]
	\path (current page.north) ++(0.0,-1.0) node[draw = black] {\small This paper has been accepted for publication in \textit{IEEE Transactions on Robotics}};
\end{tikzpicture}
\vspace{-0.3cm}

\begin{abstract}
Planning high-speed trajectories for UAVs in unknown environments requires algorithmic techniques that enable fast reaction times to guarantee safety as more information about the environment becomes available. The standard approaches that ensure safety by enforcing a ``stop" condition in the free-known space can severely limit the speed of the vehicle, especially in situations where much of the world is unknown. Moreover, the ad-hoc time and interval allocation scheme usually imposed on the trajectory also leads to conservative and slower trajectories. This work proposes FASTER (Fast and Safe Trajectory Planner) to ensure safety without sacrificing speed. FASTER obtains high-speed trajectories by enabling the local planner to optimize in both the free-known and unknown spaces. Safety is ensured by always having a safe back-up trajectory in the free-known space. The MIQP formulation proposed also allows the solver to choose the trajectory interval allocation. FASTER is tested extensively in simulation and in real hardware, showing flights in unknown cluttered environments with velocities up to $7.8$ m/s, and experiments at the maximum speed of a skid-steer ground robot ($2$ m/s).
\end{abstract}

\begin{IEEEkeywords}
UAV, Path Planning, Trajectory Optimization, Convex Decomposition.
\end{IEEEkeywords}

\IEEEpeerreviewmaketitle

{\footnotesize

\textbf{Acronyms}: UAV~(Unmanned Aerial Vehicle), MIQP~(Mixed-Integer Quadratic Program), RRT~(Rapidly-Exploring Random Tree), VIO~(Visual-Inertial Odometry), FOV~(Field of view).\\

\textbf{Code}: 
\begin{itemize}
 \item FASTER: \href{https://github.com/mit-acl/faster}{https://github.com/mit-acl/faster}
 \item Simulation worlds: \url{https://github.com/jtorde/planning_worlds_gazebo}
\end{itemize}

\textbf{Video}: \href{https://youtu.be/fkkkgomkX10}{https://youtu.be/fkkkgomkX10}
}

\section{Introduction}\label{sec:introduction}
\IEEEPARstart{D}{espite} its numerous applications, high-speed UAV navigation through unknown environments is still an open problem. 
The desired high speeds together with partial observability of the environment and limits on payload weight make this task especially challenging for aerial robots.
Safe operation, in addition to flying fast, is also critical but difficult to guarantee since the vehicle must repeatedly generate collision-free, dynamically feasible trajectories in real-time with limited sensing. Similar to the model predictive control literature, safety is guaranteed by ensuring a feasible solution exists indefinitely.

\begin{figure}[t]
	\centering
	\includegraphics[width=\columnwidth]{./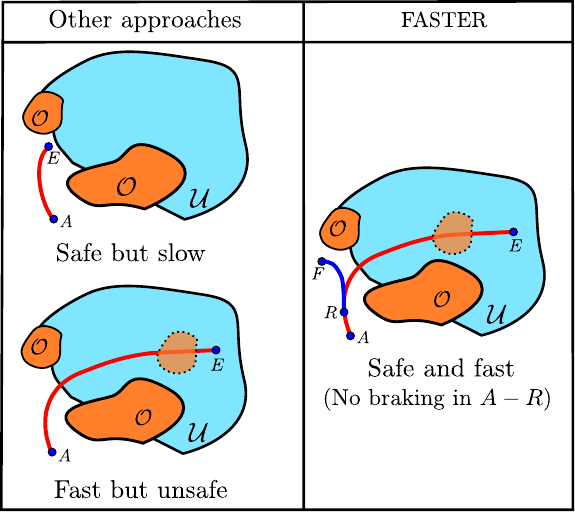}
	\caption{Safety and Speed tradeoff. $\mathcal{O}$ is the occupied-known space~(\tikzrectangle[black,fill=DarkOrange]{10pt}), and $\mathcal{U}$ is the unknown space (\tikzrectangle[black,fill=LightBlue]{10pt}). $A$ and $E$ are, respectively, the start and goal locations of the local plan.}
	\label{fig:contributions_part1}
	\vspace{-0.5cm}
\end{figure} 

If we consider $\mathbb{R}^3=\mathcal{F}\cup\mathcal{O}\cup\mathcal{U} $ where $\mathcal{F}$, $\mathcal{O}$, $\mathcal{U}$ are disjoint sets denoting free-known, occupied-known, and unknown space respectively, the following hierarchical planning architecture is commonly used: a global planner first finds the shortest piece-wise linear path from the UAV to the goal, avoiding the known obstacles $\mathcal{O}$. Then, a local planner finds a dynamically feasible trajectory in the direction given by this global plan. This local planner should find a fast \textit{and} Safe Trajectory that leads the UAV to the goal. These two requirements of \textbf{safety} and \textbf{speed} represent the following tradeoff: on one hand, safety argues for short trajectories completely contained in $\mathcal{F}$ and end points not necessarily near the global plan. As a final stop condition is needed to guarantee safety, short trajectories are generally much slower than long trajectories because the braking maneuver propagates backwards from the end to the initial state of the trajectory. On the other hand, speed argues for longer planned trajectories (usually extending farther than $\mathcal{F}$) and end points near the global plan. 

The typical way to solve the speed versus safety tradeoff is to ensure safety by planning only in $\mathcal{F}$, and then impose a final stop condition near the global plan. This can be achieved by either generating motion primitives that do not intersect $\mathcal{O} \cup \mathcal{U}$ \cite{mueller2015computationally,lopez2017aggressive3D,lopez2017aggressivelimitedFOV, tordesillas2018real}, or by constructing a convex representation of $\mathcal{F}$ to be used in an optimization \cite{deits2015efficient, liu2017planning, preiss2017trajectory}. The main limitation of these works is that safety is guaranteed at the expense of higher speeds, especially in scenarios where $\mathcal{F}$ is small compared to $\mathcal{O} \cup \mathcal{U}$. This article presents an optimization-based approach that solves this limitation by solving for \textit{two} optimal trajectories at every planning step (see Fig.~\ref{fig:contributions_part1}): The \textcolor{red}{first trajectory} is in $\mathcal{U}\cup\mathcal{F}$ and ensures a long planning horizon with an end point on the global plan. The \textcolor{blue}{second trajectory} is in $\mathcal{F}$, starts from a point along the first trajectory, and it may deviate from the global plan. Only a portion of the first trajectory is actually implemented by the UAV (therefore satisfying the \textbf{speed} requirement), while the second trajectory guarantees \textbf{safety}, since it is contained in $\mathcal{F}$ and available at the start of every replanning step. This second trajectory is only implemented if the optimization problem becomes infeasible in the next replanning steps.

A second limitation, specially for the optimization-based approaches that use convex decomposition, is the choice of the interval and time allocation method. The interval allocation decides in which polyhedron each interval of the trajectory will be located, whereas the time allocation deals with the time spent on each interval (see Fig.~\ref{fig:contributions_part2}). In order to simplify the interval allocation, a common choice is to set the number of intervals to be the same as the number of polyhedra found, forcing each interval to be in one specific polyhedron. This forces the optimizer to select the end points of each trajectory segment within the overlapping area of two consecutive polyhedra, and therefore possibly leading to more conservative or longer trajectories. 
 Moreover, since a different time for each interval has to be found, the time allocation calculation is harder, leading to higher replanning times when using optimization techniques to allocate this time, and to nonsmooth or infeasible trajectories when imposing an ad-hoc time allocation. To overcome this limitation, FASTER allows the solver to decide the interval allocation by using a number of intervals greater than the number of polyhedra found \cite{landry2016aggressive} and by allocating the same time for all the intervals. This time allocation method is efficiently found through a line search algorithm initialized with the solution at the previous replanning iteration.

\begin{figure}[t]
	\centering
	\includegraphics[width=\columnwidth]{./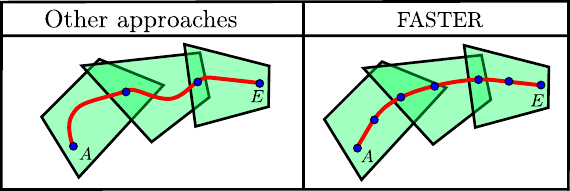}
	\caption{Interval and Time Allocation when using a convex decomposition (\tikzrectangle[black,fill=MylightGreen]{10pt}). $A$ and $E$ are, respectively, the start and goal locations of the local plan.}
	\label{fig:contributions_part2}
\end{figure}

The planning framework proposed is called \textbf{FASTER} - \textbf{FA}st and \textbf{S}afe \textbf{T}rajectory Plann\textbf{ER}, and is an extension of our two published conference papers \cite{tordesillas2018real, tordesillas2019faster}.
In summary, this work has the following contributions:
\begin{itemize}
  \item A framework that ensures feasibility of the entire collision avoidance algorithm and guarantees safety without reducing the nominal flight speed by allowing the local planner to plan in $\mathcal{F} \cup \mathcal{U}$  while always having a Safe Trajectory in $\mathcal{F}$.
	\item Reduced conservatism of the time and interval allocation compared to prior ad-hoc approaches by efficiently finding the time allocated from the result of the previous replanning iteration and then allowing the optimizer to choose the interval allocation.
	\item Extension of our previous work \cite{tordesillas2018real} by proposing a way to compute very cheaply a heuristic of the cost-to-go needed by the local planner to decide which direction is the best one to optimize toward.
	\item Simulation and hardware experiments showing agile flights in completely unknown cluttered environments, with velocities up to $7.8$ m/s, two times faster than previous state-of-the-art methods \cite{tordesillas2019faster,tordesillas2018real}. FASTER is also tested on a skid-steer robot, showing hardware experiments at the top speed of the robot ($2$ m/s).

\end{itemize}

\add{
In particular, the new contributions of this version with respect to the conference papers \cite{tordesillas2018real, tordesillas2019faster} are:
\begin{enumerate}[a)] %
\item {Theoretical analysis}: Feasibility theorem that guarantees safety for FASTER. %
\item {Simulation}: Cluttered office simulation, which presents a major challenge in terms of both clutterness for obstacle avoidance and limited visibility. %
\item {Hardware}: Duplication of the flight volume, achieving velocities up to $7.8$ m/s. %
\item {Extension: Skid-steer robot.} %
\end{enumerate}
}

\add{
Moreover, we also perform a deeper analysis of the role of the Safe Trajectory in terms of safety and speed, a comparison of the performance of the interval allocation vs. the time allocation, and a comparison between the flight corridors associated with the safe and whole trajectories.
}

\section{RELATED WORK}\label{sec:related_work}
Different methods have been proposed in the literature for planning, mapping, and the integration of these two (Fig.~\ref{fig:classification}):  

\cfbox{blue}{\textbf{Planning}} for UAVs can be classified according to the specific formulation of the optimization problem and the operating space of the local planner. 

As far as the \textit{\textbf{optimization problem}} itself is concerned, most of the current state-of-the-art methods exploit the differential flatness of the quadrotors, and, using an integrator model, minimize the squared norm of a derivative of the position to find a dynamically feasible smooth trajectory \cite{mellinger2011minimum, van1998real, richter2016polynomial}. When there are obstacles present, 
\add{some methods include them as constraints in an optimization problem, while others take these obstacles into account either after the optimization or during a search-based algorithm}.

There are approaches where the obstacle constraints (and sometimes also the input constraints) are 
\add{either checked after solving the optimization problem, or imposed during a search-based algorithm}: some of them use stitched polynomial trajectories that pass through several waypoints obtained running RRT-based methods \cite{mellinger2011minimum, richter2016polynomial, loianno2017estimation}, while others use closed-form solutions or motion-primitive libraries \cite{mueller2015computationally, florence2016integrated, lopez2017aggressivelimitedFOV, lopez2017aggressive3D, bucki2019rapid, markus2019efficient, spitzer2019fast}. \add{These methods are usually limited to short trajectories unable to perform complex maneuvers around obstacles. Sometimes these primitives are also used to search over the state space \cite{liu2017searchminimumT, liu2018searchBasedSE3, zhou2019robust}, often benefiting from ESDF representations to guide the search. However, the search-based methods are usually computationally expensive, especially in cluttered environments}.

\begin{figure}[t]
	\centering
	\includegraphics[width=0.7\columnwidth]{./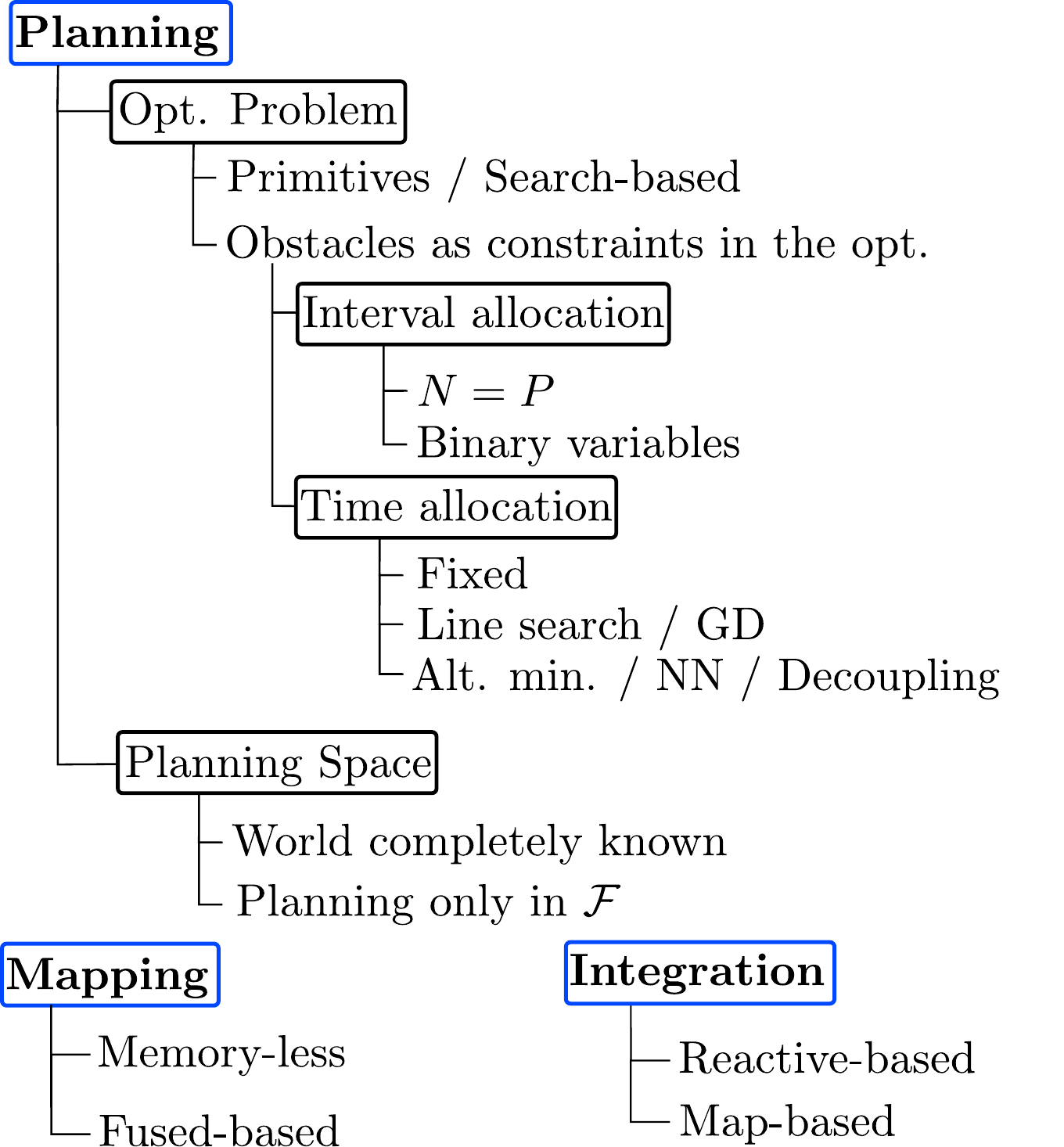}
	\caption{\add{Classification of the state-of-the-art techniques for planning, mapping, and the integration between these two.}}
	\label{fig:classification}
\end{figure}

The other approach is to include the obstacles directly 
\add{as constraints in an} 
optimization problem. This can be done in the cost function by penalizing the distance to the obstacles \cite{oleynikova2016continuous, oleynikova2018safe}, but this usually leads to computationally expensive distance fields representations and/or nonconvex optimization problems. Another option is to encode the shape of the obstacles in the constraints using successive convexification \cite{mao2018successive, augugliaro2012generation, schulman2014motion} or a convex decomposition of the environment \cite{liu2018convex, liu2017planning, wattersontrajectory, gao2019flying, lai2019safe, rousseau2019minimum}. The convergence of successive convexification typically depends on the initial guess, and is usually not suitable for real-time planning in unknown cluttered environments. The convex decomposition approach is usually done by decomposing the free-known space as a series of $P$ overlapping polyhedra \cite{liu2017planning, deits2015efficient, preiss2017trajectory}. As the trajectory is usually decomposed of $N$ third (or higher)-degree polynomials, to guarantee that the Whole Trajectory is inside the polyhedra, B\'{e}zier Curves \cite{preiss2017trajectory, sahingoz2014generation}, or the sum-of-squares condition \cite{deits2015efficient, landry2016aggressive} are often used. Moreover, for a trajectory there is both an interval (in which polyhedron each interval is) and a time allocation (how much time is assigned to each interval) problem. For the \textbf{interval allocation}, a usual decision is to use $N=P$ intervals, and force each interval to be inside its corresponding polyhedron \cite{preiss2017trajectory}. However, this sometimes can be very conservative, since the solver can only choose to place the two extreme points of each interval in the overlapping area of two consecutive polyhedra. Another option, but sometimes with higher computation times, is to use binary variables \cite{landry2016aggressive, deits2015efficient} to allow the solver to choose the specific interval allocation. \add{For the \textbf{time allocation}, different techniques have been proposed. 
One way is to impose a fixed time allocation using a specific velocity profile \cite{liu2017planning}, which can be conservative, or cause infeasibility in the optimization problem if the overlapping area of the polyhedra is not large enough. Other options are to use line search or gradient descent to iteratively obtain these times \cite{richter2016polynomial,mellinger2011minimum, liu2016high}, to use alternating minimization between the spatial and temporal trajectory \cite{wang2020alternating}, or to implement a neural network trained offline \cite{de2019real}. Another option is to decouple the spatial and the temporal trajectory \cite{gao2018optimal}, but, as noted in this work, this may cause infeasibility if the initial and final states are not static.   
 }

With regard to the \textit{\textbf{planning space}} of the local planner, several approaches have been developed. 
One approach is to use only the most recent perception data \cite{lopez2017aggressivelimitedFOV, lopez2017aggressive3D}, which requires the desired trajectory to remain within the perception sensor field of view. 
An alternative strategy is to create and plan trajectories in a map of the environment built using a history of perception data. Within this second category, in some works \cite{schouwenaars2002safe,tordesillas2018real,oleynikova2018safe}, the local planner only optimizes inside $\mathcal{F}$, which guarantees safety if the local planner has a final stop condition. However, limiting the planner to operating in $\mathcal{F}$ and enforcing a terminal stopping condition can lead to conservative, slow trajectories (especially when much of the world is unknown).  
Higher speeds can be obtained by allowing the local planner to optimize in both the free-known and unknown space ($\mathcal{F}\cup\mathcal{U}$), but with no guarantees that the trajectory is safe or will remain feasible. 

Moreover, two main categories can be highlighted in the \cfbox{blue}{\textbf{mapping}} methods proposed in the literature: memory-less and fused-based methods. The first category includes the approaches that rely only on instantaneous sensing data, using only the last measurement, or weighting the data \cite{dey2016vision, florence2018nanomap, gao2019flying, florence2016integrated}. These approaches are in general unable to reason about obstacles observed in the past \cite{lopez2017aggressive3D, lopez2017aggressivelimitedFOV}, and are specially limited when a sensor with small FOV is used. The second category is the fusion-based approach, in which the sensing data are fused into a map, usually in the form of an occupancy grid or distance fields \cite{lau2010improved, oleynikova2017voxblox}. Two drawbacks of these approaches are the influence of the estimation error, and the fusion time.

Finally, several approaches have been proposed for the \cfbox{blue}{\textbf{integration}} between the planner and the mapper: reactive and map-based planners. Reactive planners often use a memory-less representation of the environment, and closed-form primitives are usually chosen for planning \cite{lopez2017aggressive3D, lopez2017aggressivelimitedFOV}. These approaches often fail in complex cluttered scenarios. On the other hand, map-based planners usually use occupancy grids or distance fields to represent the environment. These planners either plan all the trajectory at once or implement a receding horizon planning framework, optimizing trajectories locally and based on a global planner. Moreover, when unknown space is also taken into consideration, several approaches are possible: some use optimistic planners that consider unknown space as free \cite{pivtoraiko2013incremental, chen2016online}, while in other works an optimistic global planner is used combined with a conservative local planner \cite{oleynikova2016continuous, oleynikova2018safe}.

\section{FASTER}  \label{sec:algorithm}
The notation used throughout this article is shown in Fig.~\ref{fig:notation}: $\mathcal{M}$ is a sliding map centered on $L$, the current position of the UAV. $\mathcal{F}$ and $\mathcal{O}$ will denote the free-known and occupied-known spaces respectively. Similarly, $\mathcal{F}_{\text{Unknown}}$ and $\mathcal{O}_{\text{Unknown}}$ will denote the free-unknown and occupied-unknown spaces, respectively. The total unknown space, denoted as $\mathcal{U}$, is therefore $\mathcal{U}=\mathcal{F}_\text{{Unknown}}\cup\mathcal{O}_{\text{Unknown}}$, and $\mathcal{F}$ and $\mathcal{O}$ are completely contained inside the map ($\mathcal{F} \cup \mathcal{O}	\subseteq \mathcal{M}$), and all the space outside the map is inside $\mathcal{U}$ ($ \mathbb{R}^3 \setminus  \mathcal{M} \subseteq \mathcal{U} $). Note also that FASTER is completely in 3-D, but some illustrations are in 2-D for visualization purposes. 

\begin{figure}[t]
	\centering
	\includegraphics[width=1\columnwidth]{./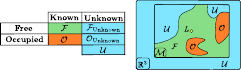}
	\caption{ Notation used for the different spaces. $L$ is the current position of the UAV, and $\mathcal{M}$ is the sliding map around the vehicle. }
	\label{fig:notation}
\end{figure}

\subsection{Mapping}
A body-centered sliding map $\mathcal{M}$ (in the form of an occupancy grid map) is used in this work.
A rolling map is desirable since it reduces the influence of the drift in the estimation error. We fuse a depth map into the occupancy grid using the 3-D Bresenham's line algorithm for ray-tracing \cite{bresenham1965algorithm}. Both $\mathcal{O}$ and $\mathcal{U}$ are inflated by the radius of the UAV to ensure safety. 

\subsection{Global Planner}
In the proposed framework, Jump Point Search (JPS)  is used as a global planner to find the shortest piece-wise linear path from the current position to the goal. JPS was chosen instead of A* because it runs an order of magnitude faster, while still guaranteeing completeness and optimality \cite{harabor2011online, liu2017planning}. The only assumption of JPS is a uniform grid, which holds in our case. 

\vspace{0.5cm}
\subsection{Convex Decomposition}
A convex decomposition is done around part of the piece-wise linear path obtained by JPS. To do this convex decomposition, we rely on the approach proposed by \cite{liu2017planning}: A polyhedron is found around each segment of the piece-wise linear path by first inflating an ellipsoid aligned with the segment, and then computing the tangent planes at the points of the ellipsoid that are in contact with the obstacles. The reader is referred to \cite{liu2017planning} for a detailed explanation. Given a piece-wise linear path with $P$ segments, we will denote the sequence of $P$ overlapping polyhedra as $\{(\bb{A}_p,  \bb{c}_p)\},\;p=0:P-1$. 

\vspace{0.5cm}
\subsection{Local Planner}\label{subsec:local_planner}
For the local planner, we distinguish these three different jerk-controlled trajectories (see Fig.~\ref{fig:different_trajectories}):
\begin{itemize}
	\item \textbf{\textcolor{red}{Whole Trajectory}:} This trajectory goes from $A$ to $E$, and it is contained in $\mathcal{F}\cup\mathcal{U}$. It has a final stop condition.
	\item \textbf{\textcolor{blue}{Safe Trajectory}:} It goes from $R$ to $F$, where $R$ is a point in the Whole Trajectory, and $F$ is any point inside the polyhedra obtained by doing a convex decomposition of $\mathcal{F}$. It is completely contained in $\mathcal{F}$, and it also has a final stop condition to guarantee safety.
	\item \textbf{\textcolor{ForestGreen}{Committed Trajectory}:} This trajectory consists of two pieces: The first part is the interval $A \rightarrow R$ of the Whole Trajectory. The second part is the Safe Trajectory. It will be shown later that this trajectory is also guaranteed to be inside $\mathcal{F}$. This trajectory is the one that the UAV will keep executing in case no feasible solutions are found in the next replanning steps. 
\end{itemize}

\begin{figure}[t]
	\centering
	\includegraphics[width=0.8\columnwidth]{./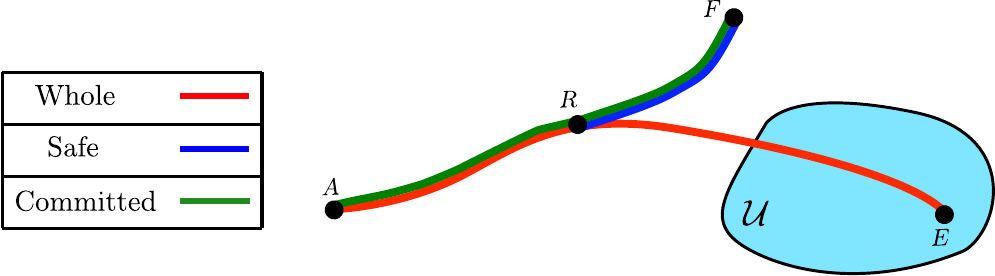}
	\caption{Trajectories used by FASTER: The Committed and Safe Trajectories are inside $\mathcal{F}$, while the Whole Trajectory is inside $\mathcal{F}\cup\mathcal{U}$.}
	\label{fig:different_trajectories}
	\vspace{0.6cm}
\end{figure} 

\begin{figure}[t]
	\centering
	\includegraphics[width=0.8\columnwidth]{./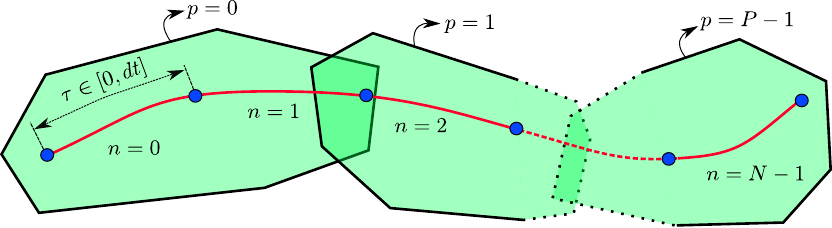}
	\caption{Each interval $n=0:N-1$ of the trajectory is a third-degree polynomial, with a total time of $dt$ per interval. $\tau\in[0,dt]$ denotes a local reference of the time inside an interval, and $p=0:P-1$ denotes the polyhedron.}
	\label{fig:traj_explanation}
	\vspace{0.6cm}
\end{figure}

The quadrotor is modeled using triple integrator dynamics with state vector $
\mathbf{x}^T = \left[ \vect{x}^T ~ \dvect{x}^T ~ \ddvect{x}^T ~ \right] = \left[\vect{x}^T ~\vect{v}^T ~\vect{a}^T\right]
$
and control input $\mathbf{u} = \dddvect{x} = \vect{j}$ (where $\vect{x}$, $\vect{v}$, $\vect{a}$, and $\vect{j}$ are the vehicle's position, velocity, acceleration, and jerk, respectively).

In the optimization problem solved by the local planner, the trajectory is divided in $N$ intervals (see Fig.~\ref{fig:traj_explanation}). Let $n=0:N-1$ denote the specific interval of the trajectory, $p=0:P-1$ the specific polyhedron and $dt$ the time allocated per interval (same for every interval $n$). If $\bb{j}(t)$ is constrained to be constant in each interval $n=0:N-1$, then the Whole Trajectory will be a spline consisting of third-degree polynomials. Matching the cubic form of the position for each interval $$\boldsymbol{x}_{n}(\tau)=\mathbf{a}_{n}\tau^{3}+\mathbf{b}_{n}\tau^{2}+\mathbf{c}_{n}\tau+\mathbf{d}_{n},\; \; \tau\in[0,dt]$$ with the expression of a cubic B\'{e}zier curve
$$\boldsymbol{x}_{n}(\tau)=\sum_{j=0}^{3}\left(\begin{array}{c}
3\\
j
\end{array}\right)\left(1-\frac{\tau}{dt}\right)^{3-j}\left(\frac{\tau}{dt}\right)^{j}\boldsymbol{r}_{nj},\;\; \tau\in[0,dt],$$
we can solve for the four control points  $\bb{r}_{nj}$ $(j=0:3)$ associated with each interval $n$:
\begin{align} & \boldsymbol{r}_{n0}=\mathbf{d}_{n}, \qquad 
	\boldsymbol{r}_{n1}=\frac{\mathbf{c}_{n}dt+3\mathbf{d}_{n}}{3} & \nonumber\\
	& \boldsymbol{r}_{n2}=\frac{\mathbf{b}_{n}dt^{2}+2\mathbf{c}_{n}dt+3\mathbf{d}_{n}}{3} & \nonumber\\
	& \boldsymbol{r}_{n3}=\mathbf{a}_{n}dt^{3}+\mathbf{b}_{n}dt^{2}+\mathbf{c}_{n}dt+\mathbf{d}_{n}  & \nonumber
\end{align}

Let us introduce the binary variables $b_{np}$, with $p=0:P-1$ and $n=0:N-1$ ($P$ variables for each interval $n=0:N-1$). As a B\'{e}zier curve is contained in the convex hull of its control points, we can ensure that the trajectory will be completely contained in this convex corridor by forcing that all the control points of an interval $n$ are in the same polyhedron \cite{sahingoz2014generation, preiss2017trajectory} with the constraint $[b_{np}=1\implies \boldsymbol{r}_{nj} \in \text{polyhedron $p$} \;\; \forall j]$, and at least in one polyhedron with the constraint $\sum_{p=0}^{P-1}b_{np}\ge1$. With this formulation, the optimizer is free to choose the specific interval allocation (i.e., which interval is inside which polyhedron).
The complete MIQP solved in each replanning step for both the Safe and the Whole trajectories is as follows:
\begin{align}\min_{\boldsymbol{j}_{n}, b_{np}} & \sum_{n=0}^{N-1}\left\Vert \mathbf{j}_{n}\right\Vert ^{2}dt\label{eq:MIQP}\\
	\textrm{s.t. } & \mathbf{x}_{0}(0)=\mathbf{x}_{\text{init}}\nonumber\\
	& \mathbf{x}_{N-1}(dt)=\mathbf{x}_{\text{final}}\nonumber\\
	& \boldsymbol{x}_{n}(\tau)=\mathbf{a}_{n}\tau^{3}+\mathbf{b}_{n}\tau^{2}+\mathbf{c}_{n}\tau+\mathbf{d}_{n}\;\forall n,\forall\tau\in[0,dt] & \nonumber\\
	& \boldsymbol{v}_{n}(\tau)=\dot{\boldsymbol{x}}_{n}(\tau)\;\qquad\qquad\qquad\qquad\;\forall n,\forall\tau\in[0,dt]\nonumber\\
	& \boldsymbol{a}_{n}(\tau)=\dot{\boldsymbol{v}}_{n}(\tau)\;\qquad\qquad\qquad\qquad\;\forall n,\forall\tau\in[0,dt]\nonumber\\
	& \mathbf{j}_{n}=6\mathbf{a}_{n}\;\forall n & \nonumber\\
	& b_{np}=1\implies\left\{ \begin{array}{c} 
		\boldsymbol{A}_{p}\boldsymbol{r}_{n0}\le\boldsymbol{c}_{p}\\
		\boldsymbol{A}_{p}\boldsymbol{r}_{n1}\le\boldsymbol{c}_{p}\\
		\boldsymbol{A}_{p}\boldsymbol{r}_{n2}\le\boldsymbol{c}_{p}\\
		\boldsymbol{A}_{p}\boldsymbol{r}_{n3}\le\boldsymbol{c}_{p}
	\end{array}\right.\quad\forall n,\forall p & \nonumber\\
	& \sum_{p=0}^{P-1}b_{np}\ge1\qquad\qquad \quad \;\;\; \forall n &\nonumber\\
	& b_{np}\in\{0,1\}\qquad \qquad \qquad  \forall n,\forall p & \nonumber\\
	& \mathbf{x}_{n+1}(0)=\mathbf{x}_{n}(dt)\quad \qquad \;\; n=0:N-2 & \nonumber\\
	& \left\Vert \boldsymbol{v}_{n}(0)\right\Vert _{\infty}\le v_{\text{max}} \qquad\quad\;\forall n & \nonumber\\
	& \left\Vert \boldsymbol{a}_{n}(0)\right\Vert _{\infty}\le a_{\text{max}}\qquad\quad\;\forall n & \nonumber\\
	& \left\Vert \boldsymbol{j}_{n}\right\Vert _{\infty}\le j_{\text{max}}\qquad\quad\quad\;\;\;\forall n & \nonumber
\end{align}

This problem is solved using Gurobi \cite{gurobi}. The decision variables of this optimization problem are the binary variables $b_{np}$ and the jerk along the trajectory $\boldsymbol{j}_{n}$. $\mathbf{x}_{\text{init}}$ and  $\mathbf{x}_{\text{final}}$ denote the initial and final states of the trajectory, respectively. The time $dt$ allocated per interval is computed as:
\begin{equation}\label{eq: find_dt}
    \resizebox{0.91\hsize}{!}{%
        $dt=f\cdot\max\{T_{v_x},T_{v_y}, T_{v_z}, T_{a_x},T_{a_y},T_{a_z},T_{j_x},T_{j_y},T_{j_z}\}/N$%
        }
\end{equation}
where $T_{v_i}$, $T_{a_i}$, $T_{j_i}$ are solution of the constant-input motions in each axis $i=\{x,y,z\}$ by applying $v_{\text{max}}$, $a_{\text{max}}$ and $j_{\text{max}}$, respectively. $f \ge 1$ is a factor that is obtained according to the solution of the previous replanning step (see Fig.~\ref{fig:dynamic_adaptation_factor}): Denoting $f_{\text{worked},k-1}$ as the factor that made the optimization feasible in the replanning step $k-1$, in the replanning step $k$ the optimizer will try values of $f$ (in increasing order) in the interval  $[f_{\text{worked},k-1}-\gamma,f_{\text{worked},k-1}+\gamma']$ until the problem converges. Here, $\gamma$ and $\gamma'$ are constant values chosen by the user. Note that, if $f=1$, then $dt$ is a lower bound on the minimum time per interval required for the problem to be feasible. Therefore, only factors $f\ge 1$ are tried. 
\add{This approach is essentially a line search for the time allocation, with the goal of trying to obtain the smallest $dt$ that makes the optimization feasible (leading therefore to faster trajectories), but at the same time trying to minimize the number of trials with different $dt$ needed until convergence.} 

\begin{figure}[]
	\centering
	\includegraphics[width=0.9\columnwidth]{./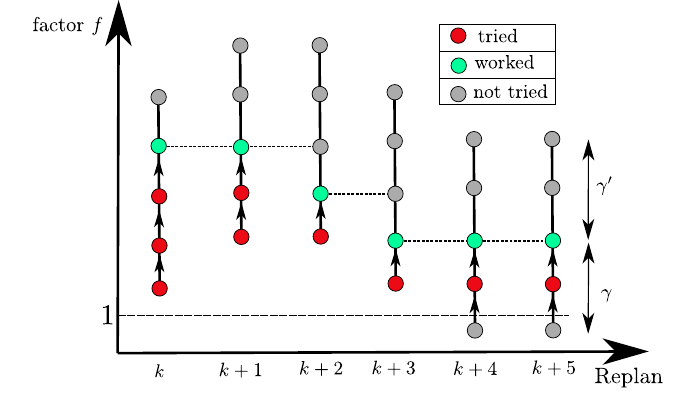}
	\caption[Dynamic adaptation of the factor $f$ used to compute the heuristic of $dt$]{Dynamic adaptation of the factor used to compute the heuristic of the time allocated per interval ($dt$): For iteration $k$, the range of factors used is taken around the factor that worked in the iteration $k-1$. As $f=1$ is the lower bound that makes the problem feasible, only factors $f\ge1$ are tried. }
	\label{fig:dynamic_adaptation_factor}
\end{figure} 

\begin{algorithm}[t]
	\footnotesize
	
	\DontPrintSemicolon
	
	\SetKwFunction{FMain}{\textbf{Replan}}
	\SetKwProg{Pn}{Function}{:}{\KwRet}
	\Pn{\FMain{}}{
		
		$k\leftarrow k+1 $, $\delta t\leftarrow\alpha \Delta t_{k-1}$  \;
Choose point $A$ in $\textcolor{ForestGreen}{\text{Committed}}_{k-1}$ with offset $\delta t$  from $L$\;
$G\leftarrow $ Projection of $G_{\text{term}}$ into map $\mathcal{M}$ \label{projection}\;
$\text{JPS}_a\leftarrow$ Run JPS $A \rightarrow G$\label{points_end}\;
$\mathcal{S} \leftarrow$ Sphere of radius $r$ centered on $A$\;
$C\leftarrow \text{JPS}_a \cap \mathcal{S}$, $\;\;D\leftarrow \text{JPS}_{k-1} \cap \mathcal{S}$
\label{cost_start}\;
\If{$\angle CAD>\alpha_0$}{
$\text{JPS}_b\leftarrow$ Modified $\text{JPS}_{k-1}$ such that $\text{JPS}_{k-1} \cap \mathcal{O}=\emptyset$\;
$D\leftarrow \text{JPS}_{b} \cap \mathcal{S}$\;
$dt_{a} \leftarrow$  Lower bound on $dt$ $A \rightarrow C$\;
$dt_{b} \leftarrow$  Lower bound on $dt$ $A \rightarrow D$\;
$J_a=N \cdot dt_a+\frac{ \left\Vert \text{JPS}_a(C \rightarrow G) \right\Vert }{v_{\text{max}}}$ \;
$J_b=N \cdot dt_b+ \frac{ \left\Vert \text{JPS}_b(D \rightarrow G) \right\Vert }{v_{\text{max}}}$\;
$\text{JPS}_k\leftarrow \underset{\{\text{JPS}_{a},\text{JPS}_{b}\}}{\argmin}\{J_{a},J_{b}\}$\;

}\Else{

$\text{JPS}_k\leftarrow \text{JPS}_a$ \label{cost_end}
}

$\text{JPS}_{\text{in}} \leftarrow $ Part of $\text{JPS}_k$ inside $\mathcal{S}$\label{whole_start}\;
$\text{Poly}_{\text{Whole}} \leftarrow $ Convex Decomposition in $\mathcal{U}\cup\mathcal{F}$ using $\text{JPS}_{\text{in}}$\;
$f_{\text{Whole}} \leftarrow [f_{\text{Whole},k-1}-\gamma,  f_{\text{Whole},k-1}+\gamma' ] $\;
$\textcolor{red}{\text{Whole}} \leftarrow$ MIQP in $\text{Poly}_{\text{Whole}}$ from $A$ to $E$  using $f_{\text{Whole}}$\label{whole_end}\;
$H\leftarrow \textcolor{red}{\text{Whole}} \cap \mathcal{U} \label{safe_start}$\;
$R\leftarrow$ Nearest state to $H$ along \textcolor{red}{Whole} that is not in inevitable collision with $\mathcal{U}$\; 
$\text{JPS}_{\text{in,known}} \leftarrow $ Part of $\text{JPS}_{\text{in}}$ in $\mathcal{F}$\;
$\text{Poly}_{\text{Safe}} \leftarrow $Convex Decomposition in  $\mathcal{F}$ using $\text{JPS}_{\text{in,known}}$\;
$f_{\text{Safe}} \leftarrow [f_{\text{Safe},k-1} -\gamma, f_{\text{Safe},k-1} +\gamma' ] $\;
\textcolor{blue}{$\text{Safe}$} $\leftarrow$ MIQP in $\text{Poly}_{\text{Safe}}$ from $R$ to $F$ using $f_{\text{Safe}}$\label{safe_end}\;

$\textcolor{ForestGreen}{\text{Committed}}_k \leftarrow \textcolor{red}{\text{Whole}}_{A \rightarrow R}\cup \textcolor{blue}{\text{Safe}}\label{committed}$\;
$f_{\text{Whole},k}\leftarrow$ Factor that worked for \textcolor{red}{Whole}\;
$f_{\text{Safe},k}\leftarrow$ Factor that worked for \textcolor{blue}{Safe}\;
$\Delta t_{k}\leftarrow$ Total replanning time\;

	}
	\normalsize
	\caption{FASTER \label{IR}}
	\label{algo: myalgorithm_iros}
\end{algorithm}

\begin{figure}[t]
	\centering
	\includegraphics[width=1\columnwidth]{./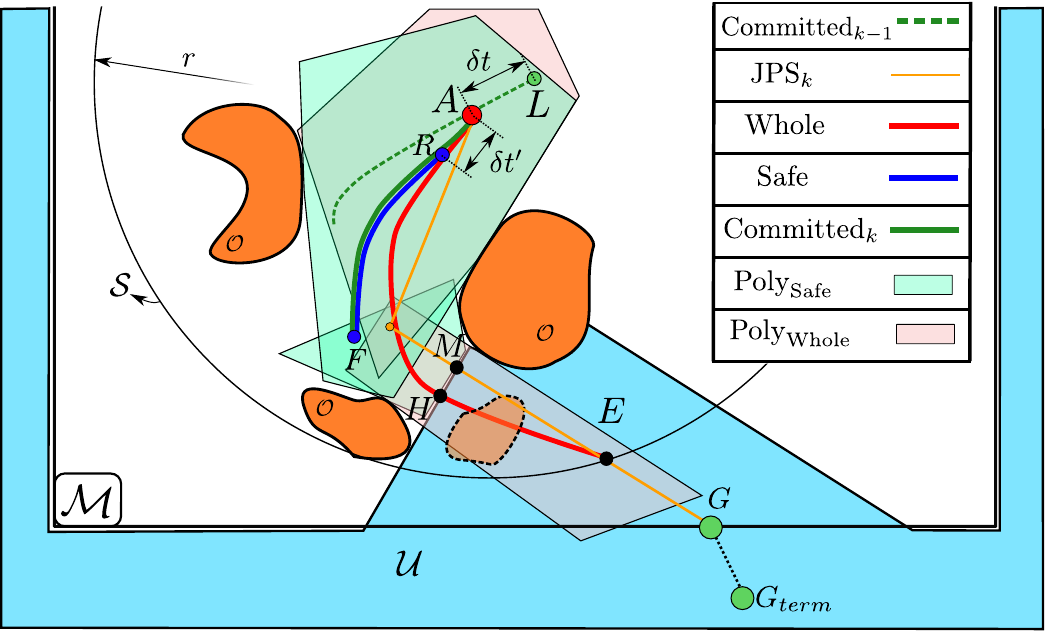}
	\caption[Illustration for Alg.\ref{algo: myalgorithm_iros}]{Illustration for Alg.\ref{algo: myalgorithm_iros}. 
		One unknown obstacle is shown with dotted line.}
	\label{fig:plan2}
\end{figure} 

\begin{figure}[t]
	\centering
	\includegraphics[width=1\columnwidth]{./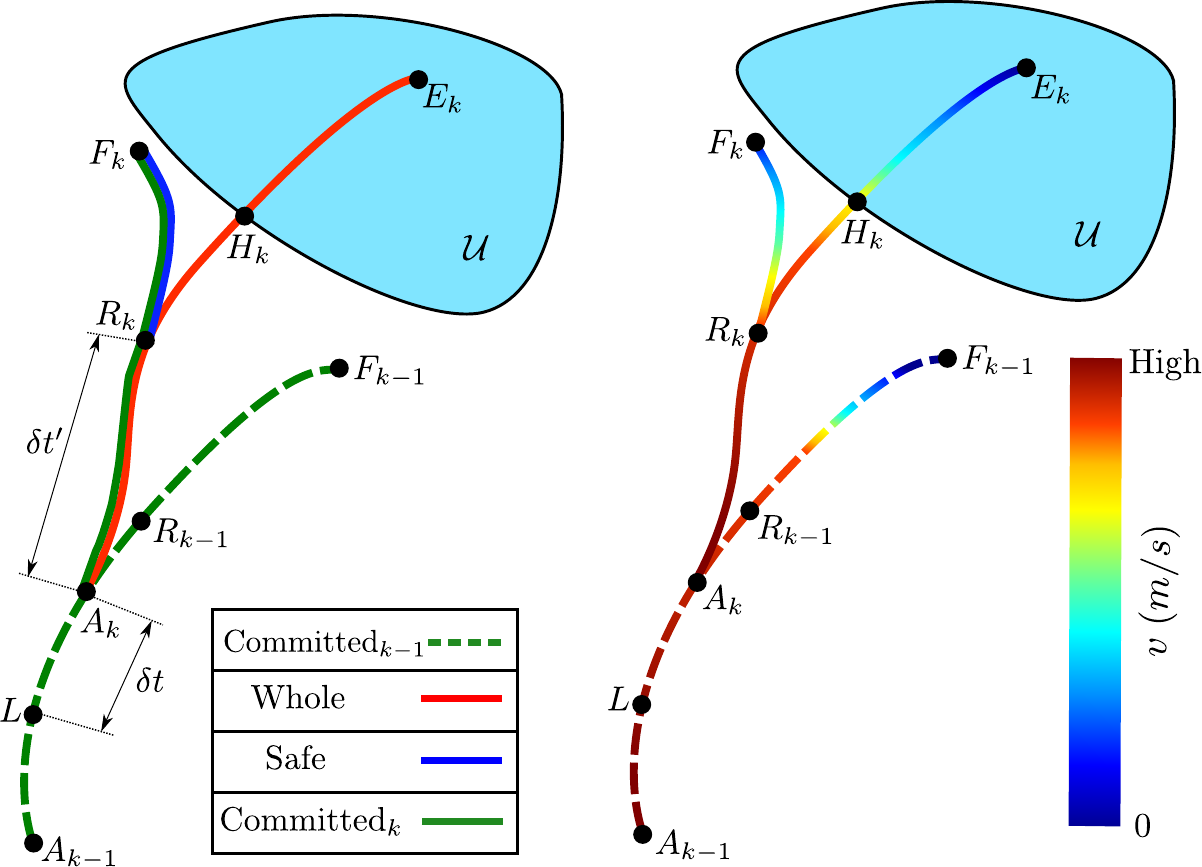}
	\caption[Trajectories used and velocity profiles]{Illustration of all the trajectories involved in Alg. \ref{algo: myalgorithm_iros} and their associated velocity profiles. $\mathcal{U}$ is the unknown space (\tikzrectangle[black,fill=MyblueLight]{10pt}), and $k$ is the replanning step.}
	\label{fig:planning_strategy_with_vel}
\end{figure} 

\definecolor{myPeach}{RGB}{255,158,0}
\definecolor{myPineGreen}{RGB}{22,143,53}
\begin{figure}[]
	\centering
	\includegraphics[width=1\columnwidth]{./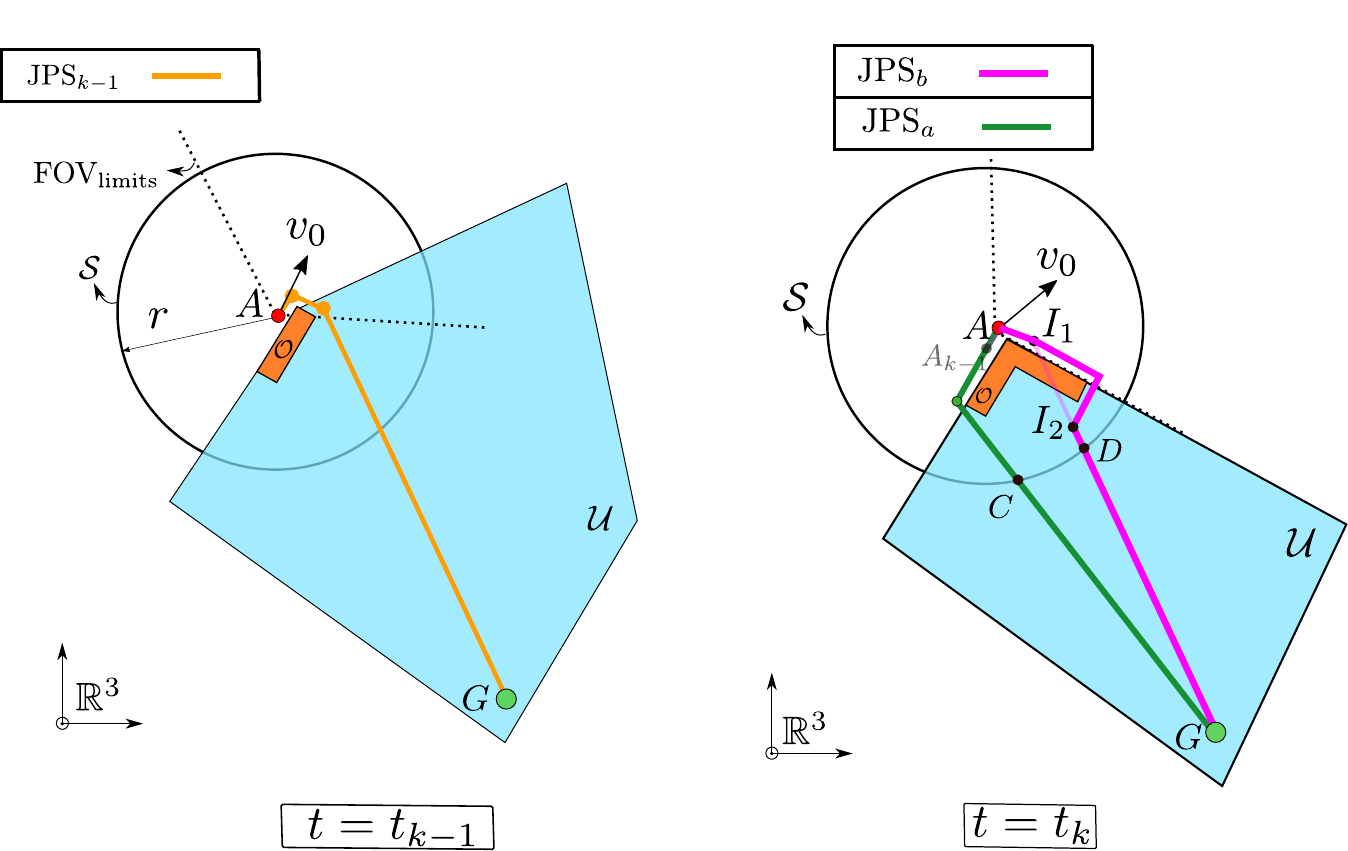}
	\caption[Choice of the direction to optimize]{ Choice of the direction to optimize. At $t=t_{k-1}$, the JPS solution chosen was \textcolor{myPeach}{$\text{JPS}_{k-1}$}. At $t=t_{k}$, JPS is run again to obtain \textcolor{myPineGreen}{$\text{JPS}_a$}, and \textcolor{myPeach}{$\text{JPS}_{k-1}$} is modified so that it does not collide with $\mathcal{O}$, obtaining \textcolor{Magenta}{$\text{JPS}_{b}$}. A heuristic of the cost-to-go in each direction is computed, and the direction with the lowest cost is chosen as the one toward which the local planner will optimize. \add{ By using $A \rightarrow I_1$,  $I_1 \rightarrow I_2$, and  $I_2 \rightarrow G$, \textcolor{Magenta}{$\text{JPS}_{b}$} will pass through the points $A$, $I_1$, $I_2$, and $G$ (all of which belonged to \textcolor{myPeach}{$\text{JPS}_{k-1}$}), and therefore, this gives a close approximation to \textcolor{myPeach}{$\text{JPS}_{k-1}$}, while avoiding $\mathcal{O}$.}}
	\label{fig:planning_strategy1}
\end{figure}

\subsection{Complete Algorithm}\label{subsec:complete_algorithm}

Algorithm \ref{algo: myalgorithm_iros} gives the full approach (see also Figs. \ref{fig:plan2} and \ref{fig:planning_strategy_with_vel}). Let $L$ be the current position of the UAV. The point $A$ is chosen in the Committed Trajectory of the previous replanning step with an offset $\delta t$ from $L$. This offset $\delta t$ is computed by multiplying the total time of the previous replanning step by $\alpha\ge1$ (typically $\alpha \approx 1.25$). The idea here is to dynamically change this offset to ensure that most of the time the solver can find the next solution in less than $\delta t$. Then, the final goal $G_{\text{term}}$ is projected into the sliding map $\mathcal{M}$ (centered on the UAV) in the direction $\overrightarrow{G_{\text{term}}A}$ to obtain the point $G$ (line \ref{projection}). Next, we run JPS from $A$ to $G$ (line \ref{points_end}) to obtain \textcolor{myPineGreen}{$\text{JPS}_a$}.

The local planner then has to decide which direction is the best one to optimize toward (lines \ref{cost_start}-\ref{cost_end}). Instead of blindly trusting the last JPS solution (\textcolor{myPineGreen}{$\text{JPS}_a$}) as the best direction for the local planner to optimize (note that JPS is a zero-order model, without dynamics encoded), we take into account the dynamics of the UAV in the following way: First of all, we modify the \textcolor{myPeach}{$\text{JPS}_{k-1}$} so that it does not collide with the new obstacles seen (Fig.~\ref{fig:planning_strategy1}): we find the points $I_1$ and $I_2$ (first and last intersections of \textcolor{myPeach}{$\text{JPS}_{k-1}$} with $\mathcal{O}$) and run JPS three times, so  $A \rightarrow I_1$,  $I_1 \rightarrow I_2$ and  $I_2 \rightarrow G$. Hence, the modified version, denoted by \textcolor{Magenta}{$\text{JPS}_{b}$}, will be the concatenation of these three paths. \add{Note that by using $A \rightarrow I_1$,  $I_1 \rightarrow I_2$, and  $I_2 \rightarrow G$, we are forcing the combined path to pass through the points $A$, $I_1$, $I_2$, and $G$ (all of which belonged to \textcolor{myPeach}{$\text{JPS}_{k-1}$}), and therefore this gives a close approximation to \textcolor{myPeach}{$\text{JPS}_{k-1}$}, while avoiding $\mathcal{O}$.}

Then, we compute a lower bound on $dt$ using Eq.~\ref{eq: find_dt} for both $A \rightarrow C$ and $A \rightarrow D$, where $C$ and $D$ are the intersections of the previous JPS paths with a sphere $\mathcal{S}$ of radius $r$ centered on $A$, where $r$ is specified by the user. Next, we find the cost-to-go associated with each direction by adding this $dt_a$ (or $dt_b$) and the time it would take the UAV to go from $C$ (or $D$) to $G$ following the JPS solution flying at $v_{\text{max}}$. Finally, the one with lowest cost is chosen, so $\text{JPS}_k\leftarrow \underset{\{\textcolor{myPineGreen}{\text{JPS}_{a}},\textcolor{Magenta}{\text{JPS}_{b}}\}}{\argmin}\{J_{a},J_{b}\}$, \add{which is then the direction toward which the local planner optimizes}.
To save computation time, this decision between \textcolor{myPineGreen}{$\text{JPS}_a$} and \textcolor{Magenta}{$\text{JPS}_{b}$} is made only if the angle $\angle CAD$ exceeds a certain threshold $\alpha_{0}$ (typically $15^{\circ}$). Note that $\angle CAD$ gives a measure of how much the JPS solution has changed with respect to the iteration $k-1$. A small angle indicates that \textcolor{myPineGreen}{$\text{JPS}_a$} and \textcolor{myPeach}{$\text{JPS}_{k-1}$} are very similar (at least within the sphere $\mathcal{S}$), and that therefore the direction of the local plan will not differ much from the iteration $k-1$.  

The \textcolor{red}{Whole Trajectory} (lines \ref{whole_start}-\ref{whole_end}) is obtained as follows. We do the convex decomposition \cite{liu2017planning} of $\mathcal{U} \cup \mathcal{F}$ around the part of $\text{JPS}_{k}$ that is inside the sphere $\mathcal{S}$, which we denote as $\text{JPS}_{\text{in}}$. This gives a series of overlapping polyhedra that we denote as $\text{Poly}_{\text{Whole}}$. Then, the MIQP in (\ref{eq:MIQP}) is solved using these polyhedral constraints to obtain the Whole Trajectory.

The \textcolor{blue}{Safe Trajectory} is computed as in lines \ref{safe_start}-\ref{safe_end}. First, we compute the point $H$ as the intersection between the Whole Trajectory and $\mathcal{U}$. Then, we have to choose the point $R$ along the Whole Trajectory as the start of the Safe Trajectory. To do this, note that, on one hand, $R$ should be chosen as far as possible from $A$, so that $\delta t$ can be chosen larger in the next replanning step, which helps to guarantee that $A$ is not chosen on the Safe Trajectory (where the braking maneuver happens). On the other hand, however, a point $R$ too close to $H$ may lead to an infeasible problem for the Safe Trajectory optimizer. We propose two ways to compute $R$: The first one is to choose it with an offset $\delta t'$ from $A$, where $\delta t'$ is computed by multiplying the previous replanning time by $\beta\ge1$. The second (and better) way to solve this tradeoff is the following one: we can choose $R$ as the nearest state to $H$ (in the segment $A\rightarrow H$ of the Whole Trajectory) that is not in inevitable collision with $\mathcal{U}$. To compute an approximation of this state in a very efficient way, we choose $R$ as the last point (going from $A$ to $H$ along the Whole Trajectory) that satisfies
$$\text{sign}\left[\boldsymbol{v}_{R,j}\left(\boldsymbol{x}_{H,j}-\boldsymbol{x}_{R,j}\right)\right]\cdot\frac{\boldsymbol{v}_{R,j}^{2}}{2\left|a_{\text{max}}\right|}<\left|\boldsymbol{x}_{H,j}-\boldsymbol{x}_{R,j}\right|$$
where $\boldsymbol{v}_{R,j}$, $\boldsymbol{x}_{R,j}$ and $\boldsymbol{x}_{H,j}$ are, respectively, the velocity of $R$, the position of $R$, and the position of $H$ in the axes $j=\{x,y\}$.
Here, we have approximated the system as a double integrator model in each axis and, hence, $\frac{\boldsymbol{v}_{R,j}^{2}}{2\left|a_{\text{max}}\right|}$ is the minimum stopping distance. Due to these two approximations (double integrator and decoupling in axes $x$ and $y$), this heuristic may be conservative. We ignore the axis $z$ in this computation to reduce the conservativeness of this heuristic.

Note that even if this heuristic leads to a choice of $R$ for which no feasible collision-free (with $\mathcal{U}\cup \mathcal{O}$) trajectory exists, the optimizer will not find a solution in that replanning step and, therefore, will continue executing the solution of the previous replanning step. 

After choosing the point $R$, we do the convex decomposition of $\mathcal{F}$ using the part of $\text{JPS}_{\text{in}}$ that is in $\mathcal{F}$, obtaining the polyhedra $\text{Poly}_{\text{Safe}}$. Then, we solve the MIQP from $R$ to any point $F$ inside $\text{Poly}_{\text{Safe}}$ (this point $F$ is chosen by the optimizer).

In both of the convex decompositions presented earlier, one polyhedron is created for each segment of the piecewise linear paths. To obtain a less conservative solution (i.e. larger polyhedra), we first check the length of segments of the JPS path, creating more vertexes if this length exceeds a certain threshold $l_{\text{max}}$. Moreover, we truncate the number of segments in the path to ensure that the number of polyhedra found does not exceed a threshold $P_{\text{max}}$. This helps reduce the computation times (see Sec. \ref{sec:sim_results}).

Finally (line \ref{committed}), we compute the \textcolor{ForestGreen}{Committed Trajectory} by concatenating the piece $A \rightarrow R$ of the Whole Trajectory, and the Safe Trajectory. Note that in this algorithm we have run two \textit{decoupled} optimization problems per replanning step: 1) one for the Whole Trajectory, and 2) one for the Safe Trajectory. This ensures that the piece $A \rightarrow R$ is not influenced by the braking maneuver $R \rightarrow F$, and therefore, it guarantees a higher nominal speed on this first piece. The intervals $L \rightarrow A$ and $A \rightarrow R$ have been designed so that at least one replanning step can be solved within that interval.

The UAV will continue executing the trajectory of the previous replanning step ($\text{Committed}_{k-1}$) if one of these three scenarios happens:
\begin{itemize}
    \item \textbf{Scenario 1:} \label{scenario1} Either of the two optimizations is infeasible. 
    \item \textbf{Scenario 2:} \label{scenario2} The piece $A-R$ intersects $\mathcal{U}$.
    \item \textbf{Scenario 3:} \label{scenario3} The replanning takes longer than $\delta t$.
\end{itemize}

\begin{figure}[]
	\centering
	\includegraphics[width=1\columnwidth]{./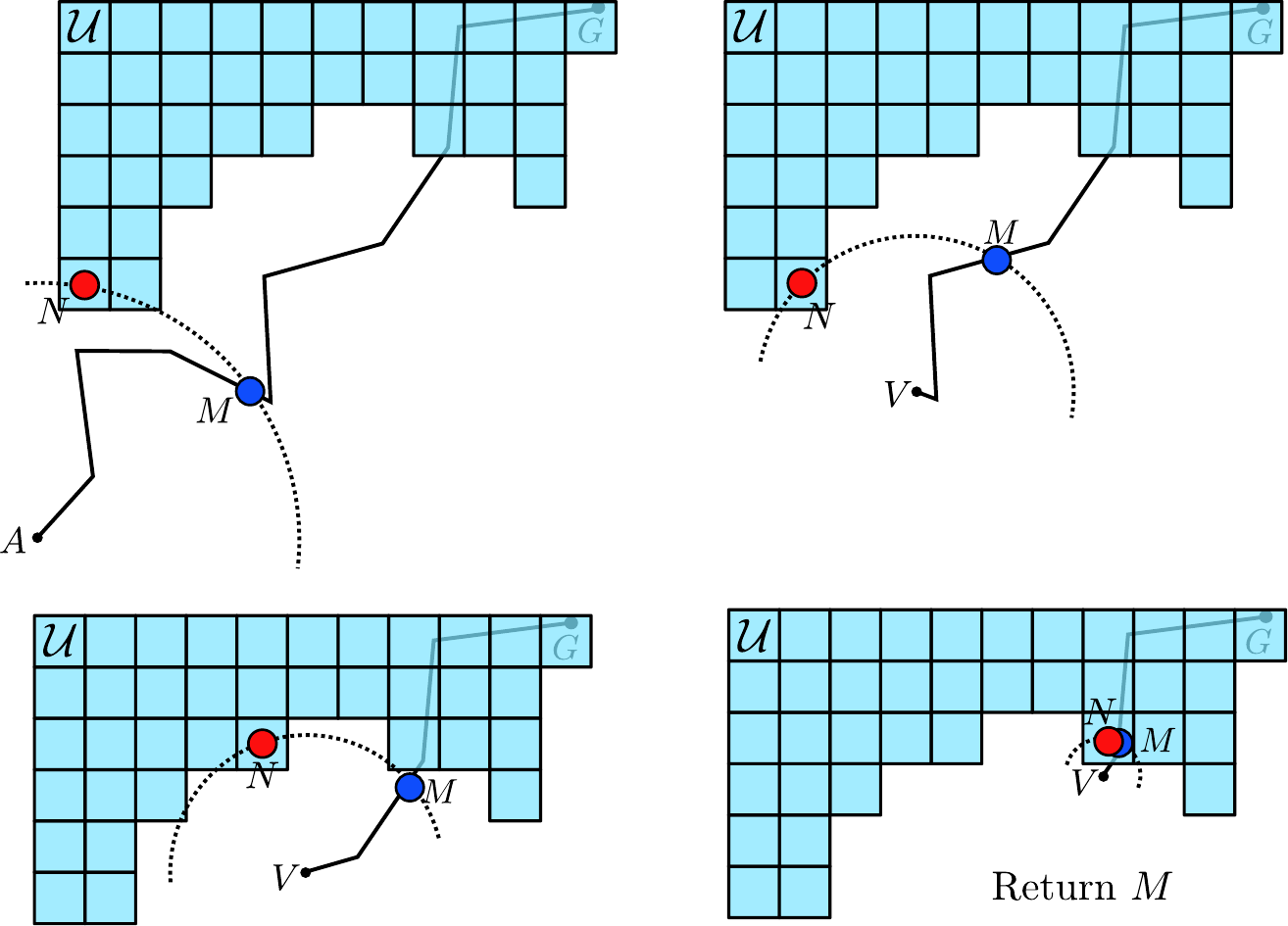}
	\caption[Illustration of the algorithm to find the intersection]{Illustration of Alg. \ref{algo: algorithm_intersection} to efficiently find (an approximation of) the intersection between a piece-wise linear path and a voxel grid. $\mathcal{U}$ and $\text{JPS}_k$ are used in this figure, but in FASTER this algorithm is also used with $\mathcal{O}$ and $\text{JPS}_{k-1}$.}
	\label{fig:find_intersection}
\end{figure} 

\begin{algorithm}[t]
	\footnotesize
	
	\DontPrintSemicolon
	\SetKwFunction{FMain}{\textbf{FindIntersection}}
	\SetKwProg{Pn}{Function}{:}{\KwRet}
	\Pn{\FMain{}}{
		\While{$\text{JPS}_k\neq \emptyset$}{
		    $V\leftarrow$ First element of $\text{JPS}_k$ \; 
			$N \leftarrow$ Find nearest neighbour of $V$ in $\mathcal{U}$ \label{near_neig} \;
			$r \leftarrow \left\Vert N-V \right\Vert$\;
			\If{ $r<\epsilon$  \label{end_inters1}}{
				\Return V \label{end_inters2}
			}			
			$\mathcal{S}\leftarrow$ Sphere of radius $r$ centered on $V$\;
			$M\leftarrow \text{JPS}_k \cap \mathcal{S}  \label{intersec}$\;
			
			Remove from $\text{JPS}_k$ the vertexes inside $\mathcal{S} $\;
			Insert $M$ at the front of $\text{JPS}_k$
		}
		\Return No Intersection
	}
	\normalsize
	\caption{FIND INTERSECTION}
	\label{algo: algorithm_intersection}
\end{algorithm}

In Alg.~\ref{algo: myalgorithm_iros}, it is required to compute the intersection between a piece-wise linear path (the solution of JPS) and a voxel grid ($\mathcal{U}$ or $\mathcal{O}$) to obtain the points $I_1$, $I_2$ or $M$. To do this in an efficient way, we use Alg.~\ref{algo: algorithm_intersection}, depicted in Fig.~\ref{fig:find_intersection}. We first find the nearest neighbor $N$ from the beginning of the piece-wise linear path $A$ (line \ref{near_neig}), and compute the intersection $M$ between the path and a sphere $\mathcal{S}$ centered on $A$ with radius equal to the distance between $A$ and $N$ (line \ref{intersec}). As it is guaranteed that all the points of the path that are inside $\mathcal{S}$ do not intersect with the voxel grid, we can repeat the same procedure again, but this time starting from $M$. This process continues until the distance to the nearest neighbor is below some threshold $\epsilon>0$ (lines \ref{end_inters1}--\ref{end_inters2}). \add{Note that, instead of Alg.~\ref{algo: algorithm_intersection}, another option would be to represent $\mathcal{F}$ as a voxel grid, and then use standard ray-tracing (such as the 3-D Bresenham's line Algorithm \cite{bresenham1965algorithm}) for each of the segments of the  piece-wise linear path. However, this might be very computationally expensive for grids $\mathcal{F}$ with small voxel sizes.}

\subsection{Feasibility Theorem} \label{subsec:feasibilityTheorem}

We can now state the following feasibility theorem for FASTER, which guarantees that all the Committed Trajectories are completely contained inside free space (known or unknown), and that, therefore, safety is guaranteed. Here, $k$ denotes the replanning step. 

\begin{assumption}
\label{assumption_theorem}
The map $\mathcal{M}$ is noise-free and the world is static: $\mathcal{F}_k \cup \mathcal{F}_{\text{Unknown},k}=\mathcal{F}_{k+1} \cup \mathcal{F}_{\text{Unknown},k+1},\;\;\forall k$. 
\end{assumption}

\begin{theorem}
    \label{feasibility_theorem}
	Under the assumption \ref{assumption_theorem}, Alg. \ref{algo: myalgorithm_iros} achieves
	$$\text{Committed}_k \subseteq\mathcal{F}_k \cup \mathcal{F}_{\text{Unknown},k}\;\; \forall k$$
\end{theorem}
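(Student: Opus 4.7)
The plan is to prove Theorem~\ref{feasibility_theorem} by induction on the replanning index $k$. In each inductive step I distinguish the two possible outcomes of the replanner: either it falls into one of the three scenarios listed at the end of Sec.~\ref{subsec:complete_algorithm} and the previous commitment is retained, or it succeeds and assembles a fresh Committed Trajectory from the $A\to R$ piece of the new Whole Trajectory concatenated with the new Safe Trajectory.

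For the base case, the UAV starts from rest at a known free-space location, so $\text{Committed}_0$ is the trivial trajectory at that point and lies in $\mathcal{F}_0$. For the inductive step, assume $\text{Committed}_{k-1}\subseteq \mathcal{F}_{k-1}\cup\mathcal{F}_{\text{Unknown},k-1}$. By Assumption~\ref{assumption_theorem} the union $\mathcal{F}\cup\mathcal{F}_{\text{Unknown}}$ is time-invariant, so this hypothesis immediately gives $\text{Committed}_{k-1}\subseteq \mathcal{F}_k\cup\mathcal{F}_{\text{Unknown},k}$. If Scenario~1, 2, or~3 triggers, $\text{Committed}_k$ is by construction (the remainder of) $\text{Committed}_{k-1}$ and the conclusion follows at once. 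This is the easy half of the step and is where the noise-free, static-world assumption is essential.

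For the successful branch, $\text{Committed}_k$ is the concatenation of two pieces that share the endpoint $R$. The Safe Trajectory is constrained by the MIQP in~\eqref{eq:MIQP} to lie in $\text{Poly}_{\text{Safe}}$, which is obtained by convex decomposition of $\mathcal{F}_k$; since a B\'{e}zier curve lies in the convex hull of its control points, $\text{Safe}_k\subseteq \mathcal{F}_k$. The Whole Trajectory is constrained to lie in $\text{Poly}_{\text{Whole}}\subseteq \mathcal{U}_k\cup\mathcal{F}_k$, which a priori only guarantees it avoids the occupied-known space $\mathcal{O}_k$. The key observation is that Scenario~2 would have aborted the commitment update if the $A\to R$ piece had intersected $\mathcal{U}_k$; since we are in the successful branch, that piece is disjoint from $\mathcal{U}_k$, and therefore $A\to R \subseteq (\mathcal{U}_k\cup\mathcal{F}_k)\setminus\mathcal{U}_k = \mathcal{F}_k$, using that $\mathcal{F}$ and $\mathcal{U}$ are disjoint by definition. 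Both pieces sit inside $\mathcal{F}_k\subseteq \mathcal{F}_k\cup\mathcal{F}_{\text{Unknown},k}$, closing the induction.

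The main obstacle I expect is the bookkeeping around the handoff point $A$: one has to argue that while the replanner of step $k$ is running the UAV keeps following $\text{Committed}_{k-1}$ (hence stays in free space by the inductive hypothesis and Assumption~\ref{assumption_theorem}), and that $A$ is chosen far enough ahead of $L$ along $\text{Committed}_{k-1}$ that the new plan is ready before the vehicle reaches $A$. The offset $\delta t = \alpha\,\Delta t_{k-1}$ with $\alpha\ge 1$, combined with the Scenario~3 guard, enforces this scheduling invariant; the $A\to R$ segment of every committed plan is in turn long enough to cover the next replanning window, so the argument propagates to step $k+1$. Once this timing accounting is made precise, the induction above closes.
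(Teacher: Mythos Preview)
Your proposal is correct and follows essentially the same inductive argument as the paper: a case split between the three abort scenarios (where Assumption~\ref{assumption_theorem} carries $\text{Committed}_{k-1}$ forward) and the successful branch (where $\text{Safe}\subseteq\text{Poly}_{\text{Safe}}\subseteq\mathcal{F}_k$ and the Scenario~2 guard forces $A\to R\subseteq\mathcal{F}_k$). If anything, you spell out more than the paper does---the paper's successful-branch case is dispatched with a single ``by construction of the algorithm,'' and the timing/handoff bookkeeping around $A$ that you flag is not formally treated there either.
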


\begin{proof}
	This theorem can be proven by induction:
	\begin{enumerate}
		\item \textbf{Base case}: $\text{Committed}_1$ is the union of $A_1\rightarrow R_1$ and  the Safe Trajectory. The interval $A_1\rightarrow R_1$ is in $\mathcal{F}_1$ because it has been checked against collision with $\mathcal{U}_1$ and is contained in a convex corridor that does not intersect $\mathcal{O}_1$. The Safe Trajectory is inside $\mathcal{F}_1$ by construction. Therefore, $\text{Committed}_1\subseteq\mathcal{F}_1 \cup \mathcal{F}_{\text{Unknown},1}$.
		\item \textbf{Recursion}: If  $\text{Committed}_k \subseteq\mathcal{F}_k \cup \mathcal{F}_{\text{Unknown},k}$, two different situations can happen in iteration $k+1$: 
		\begin{enumerate}
		    \item One of the scenarios 1, 2, or 3 happens. The algorithm will choose $\text{Committed}_{k+1}=\text{Committed}_{k}$, and by the assumption  \ref{assumption_theorem} we have that $\text{Committed}_{k+1} \subseteq\mathcal{F}_{k+1} \cup \mathcal{F}_{\text{Unknown},k+1}$
			\item In any other case, the trajectory obtained ($\text{Committed}_{k+1}$) will be inside $\mathcal{F}_{k+1}$ by construction of the algorithm. 
		\end{enumerate} 
		
		Hence, we conclude that
		\begin{eqnarray*}
	&&	\text{Committed}_k \subseteq\mathcal{F}_k \cup \mathcal{F}_{\text{Unknown},k}\\
	&&	\implies \text{Committed}_{k+1} \subseteq\mathcal{F}_{k+1} \cup \mathcal{F}_{\text{Unknown},k+1}
		\end{eqnarray*}
	\end{enumerate}
\end{proof}

\begin{remark}
    \label{remark2_theorem}
    The theorem does not assume that $\mathcal{F}_k \subseteq \mathcal{F}_{k+1}$. In other words, it does not assume that the size of the free-known space always increases: $\mathcal{F}_k \subseteq \mathcal{F}_{k+1}$ is not necessarily true due to the sliding map. Note, however, that the proof does not depend on the shape of the map nor on the length of the history kept in this map. Hence, the theorem is also valid for the following two cases:
    
    \begin{itemize}
        \item a nonsliding global map $\mathcal{M}\equiv\mathbb{R}^3$.
        \item a map $\mathcal{M}\equiv \text{FOV}$ (Field of View of the sensor), obtained uniquely by considering the instantaneous sensing data and, therefore, not keeping history in the map.
    \end{itemize}
    
\end{remark}

\begin{figure}[t]
	\centering
	\includegraphics[width=\columnwidth]{./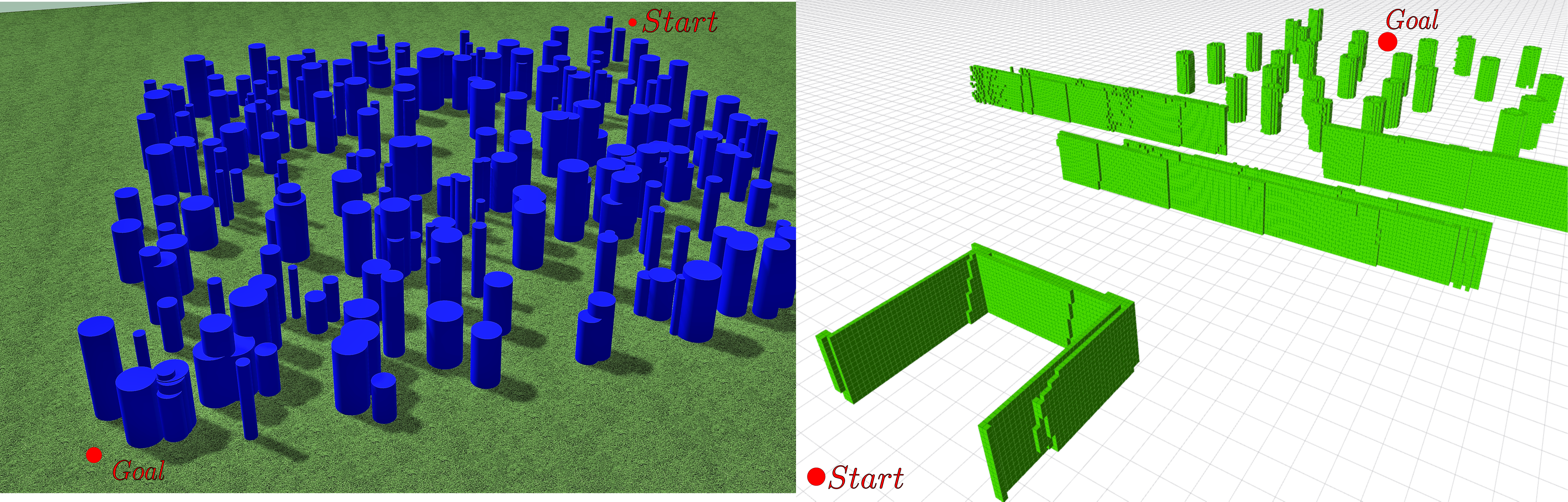}
	\caption[Forest and bugtrap environments used in the simulation]{Forest (left) and bugtrap (right) environments used in the simulation. The forest is $50$~m~$\times$~$50$~m, and the grid in the bugtrap environment is $1$~m~$\times$~$1$~m.}
	\label{fig:forest_and_bugtrap}
\end{figure}

\begin{remark}
    By allowing the algorithm to choose $\text{Committed}_{k+1}=\text{Committed}_{k}$ (which occurs when one of the scenarios 1, 2, or 3 happen), in iteration $k+1$ the UAV may commit to a trajectory that has some parts outside the map $\mathcal{M}_{k+1}$. As proven above, it is still guaranteed that $\text{Committed}_{k+1} \subseteq \mathcal{F}_{k+1} \cup \mathcal{F}_{\text{Unknown},k+1}$ . This constitutes a form of data compression, where the information of a part of the world being free (which was obtained in iteration ${k}$ or before) is embedded in the trajectory itself and not directly in the map $\mathcal{M}_{k+1}$.
\end{remark}

\begin{figure*}[]
	\centering
	\includegraphics[width=\textwidth]{./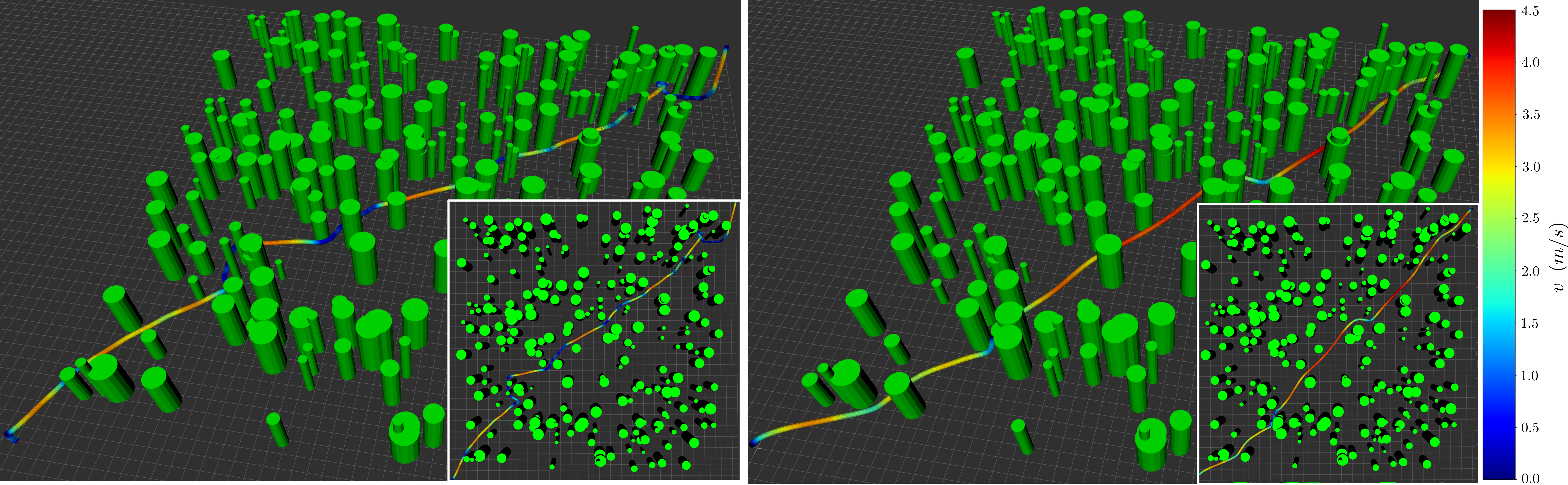}
	\caption[Velocity profile in a random forest simulation]{Velocity profile in a random forest simulation. On the left the results of our previous work \cite{tordesillas2018real} and on the right FASTER.}
	\label{fig:forest_comparison_vel}
\end{figure*}

\subsection{Controller}\label{subsec:controller_uav}
\add{To track the trajectory obtained by FASTER, we used the cascade controller presented in \cite{lopez2016low}. The yaw of the UAV is chosen such that the camera of the UAV points to $M$ (intersection between $\text{JPS}_k$ and $\mathcal{U}$, see Fig.~\ref{fig:plan2}). This controller is used in all the UAV simulation and hardware experiments of this article. In the real hardware experiments, position, velocity, attitude, and IMU biases are estimated by fusing propagated IMU measurements with an external motion capture system.}

\section{Simulation Results} \label{sec:sim_results}

\add{
\subsection{Forest, bugtrap and office simulations} \label{sec:forest_bugtrap_office_sim}
}

\begin{figure}[t]
\centering
\begin{minipage}{\columnwidth}
	\centering
	\includegraphics[width=\columnwidth,trim=0 0 0 50,clip]{./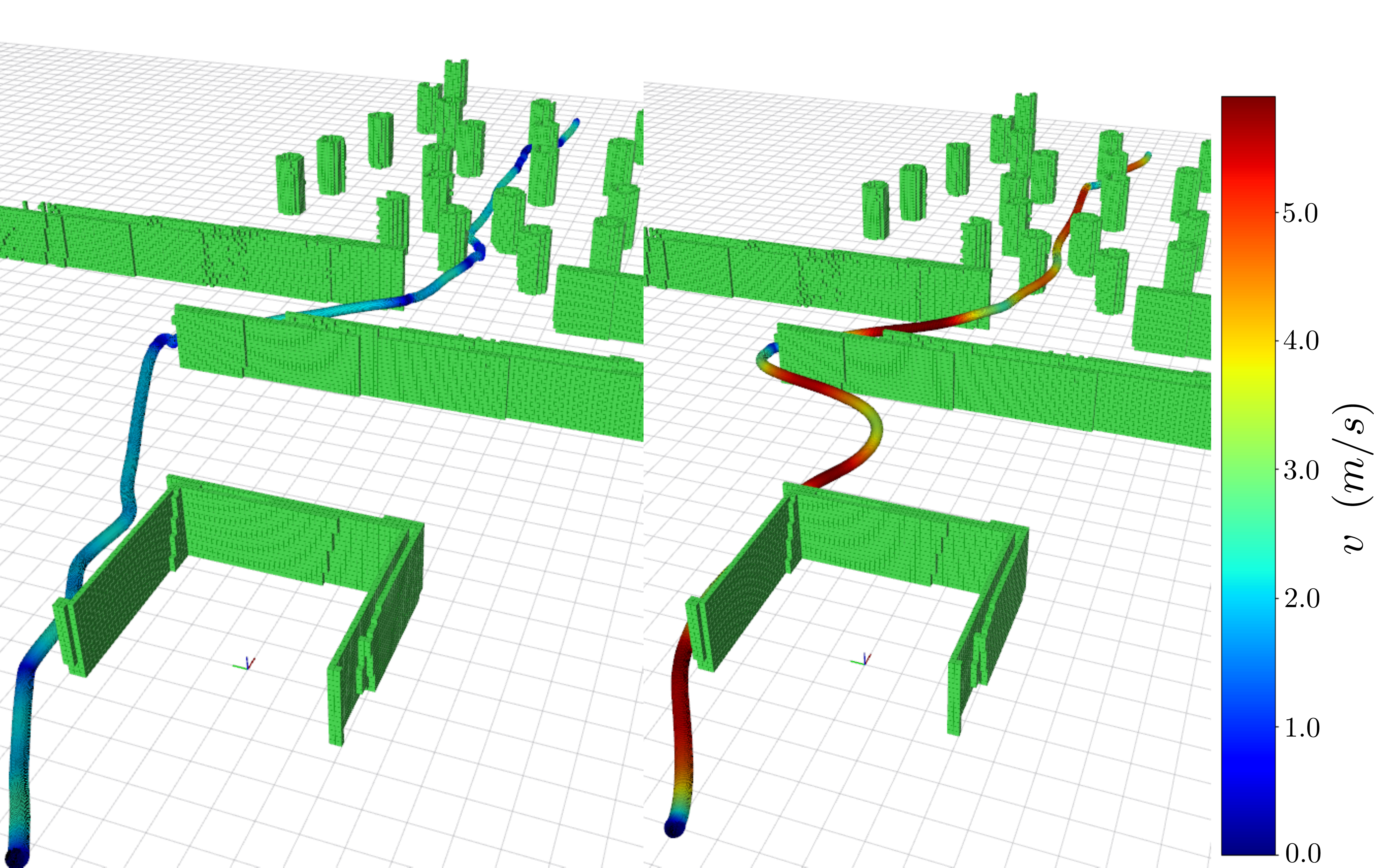}
	\captionof{figure}{Velocity profile in the bugtrap simulation. On the left the results of our previous work \cite{tordesillas2018real} and on the right FASTER.}
	\label{fig:bugtrap_comparison_vel}
\end{minipage}
\begin{minipage}{\columnwidth}

	\vspace*{.2in}
 	\renewcommand\thetable{III}%
	\captionof{table}{\label{tab:table_bugtrap} Comparison between \cite{tordesillas2018real} and FASTER of flight distances and times in a bugtrap simulation.}
	\centering 
	\begin{tabular}{p{2cm} >{\centering\arraybackslash}p{2cm} >{\raggedleft\arraybackslash}p{1.75cm} >{\raggedleft\arraybackslash}p{1.75cm}  }
		\hline
		\hline
		
		\multicolumn{1}{l}{\textbf{Method}}               & \textbf{Distance (m)}  & \textbf{Time (s)} \\ %
		Multi-Fid.         & 56.8                   & 37.6             \\ 
		\textbf{FASTER}                      & \textbf{55.2}         & \textbf{13.8}    \\ \hline
		\multicolumn{1}{l}{\hspace*{-.8em} \rule{0pt}{10pt} \textbf{Improvement (\%)}}    & \textbf{2.8}              & \textbf{63.3}                                    \\ \hline \hline
	\end{tabular}

\end{minipage}
\end{figure}

\begin{table}
\begin{center}
    \renewcommand\thetable{I}%
	\caption{\label{tab:table_forest_distance}Distances obtained in 10 random forest simulations. The distance values are computed for the cases that reach the goal. All the results (except the ones of \cite{tordesillas2018real} and FASTER) were provided by the authors of \cite{oleynikova2018safe}.}
	\begin{tabular}{p{1.8cm} >{\centering\arraybackslash}p{1.1cm} >{\raggedleft\arraybackslash}p{0.55cm} >{\raggedleft\arraybackslash}p{0.55cm} >{\raggedleft\arraybackslash}p{0.55cm} >{\raggedleft\arraybackslash}p{0.55cm}}
		\hline
		\hline
		\multicolumn{1}{l}{\textbf{Method}} & \multicolumn{1}{l}{\textbf{Number of}}     & \multicolumn{4}{c}{\textbf{Distance (m)}}                                                            \\ \cline{3-6} 
		\multicolumn{1}{l}{}               & \textbf{Successes} & \textbf{Avg}  & \textbf{Std} & \multicolumn{1}{l}{\textbf{Max}} & \multicolumn{1}{l}{\textbf{Min}} \\ %
		Incremental                        & 0                  & -             & -            & -                                & -                                \\ %
		Rand. Goals                        & \textbf{10}        & 138.0         & 32.0         & 210.5                            & 105.6                            \\ %
		Opt. RRT$^\star$                   & 9                  & 105.3         & 10.3         & 126.4                            & 95.5                             \\ %
		Cons. RRT$^\star$                  & 9                  & 155.8         & 52.6         & 267.9                            & 106.2                            \\ %
		NBVP    & 6                  & 159.3         & 45.6         & 246.9                            & 123.6                            \\ %
		SL Expl. & 8                  & 103.8         & 21.6         & 148.3                            & 86.6                             \\ 
		Multi-Fid.       &  \textbf{10}                 & 84.5          & 11.7         & 109.4                            & 73.2                             \\ 
		\textbf{FASTER}                      &  \textbf{10}                 & \textbf{77.6} & \textbf{5.9} & \textbf{88.0}                    & \textbf{70.7}                    \\ \hline
		\multicolumn{2}{l}{\hspace*{-.5em} \rule{0pt}{10pt} \textbf{Min/Max improv. (\%)}}                          & \textbf{8/51}              & \textbf{43/89} & \textbf{20/67} & \textbf{3/43}                                       \\ \hline \hline
	\end{tabular}
	\vspace*{.2in}			
	\renewcommand\thetable{II}%
	\caption{\label{tab:table_forest_time} Comparison between \cite{tordesillas2018real} and FASTER of flight times in the forest simulation. Results are for 10 random forests. }
	\begin{tabular}{p{1.2cm} >{\centering\arraybackslash}p{1.cm} >{\raggedleft\arraybackslash}p{0.75cm} >{\raggedleft\arraybackslash}p{0.75cm} >{\raggedleft\arraybackslash}p{0.75cm} >{\raggedleft\arraybackslash}p{0.75cm}}
		\hline
		\hline
		\multicolumn{1}{l}{\textbf{Method}} &       & \multicolumn{4}{c}{\textbf{Time (s)}}                                                            \\ \cline{3-6} 
		\multicolumn{1}{l}{}               &   & \textbf{Avg}  & \textbf{Std} & \multicolumn{1}{l}{\textbf{ Max}} & \multicolumn{1}{l}{\textbf {Min}} \\ %
		Multi-Fid.        &                   & 61.2          & 16.8         & 92.5                             & 37.9                             \\ 
		\textbf{FASTER}                      &          & \textbf{29.2} & \textbf{4.2} & \textbf{36.8}                    & \textbf{21.6}                    \\ \hline
		\multicolumn{2}{l}{\hspace*{-.5em} \rule{0pt}{10pt} \textbf{Improvement (\%)}}                          & \textbf{52.3}              & \textbf{75.0} & \textbf{60.2} & \textbf{43.0}                                       \\ \hline \hline
	\end{tabular}

\end{center}
\end{table}

\begin{figure}
\centering
\begin{minipage}{\columnwidth}
	\centering
	\includegraphics[width=1\columnwidth,trim=0 0 0 0,clip]{./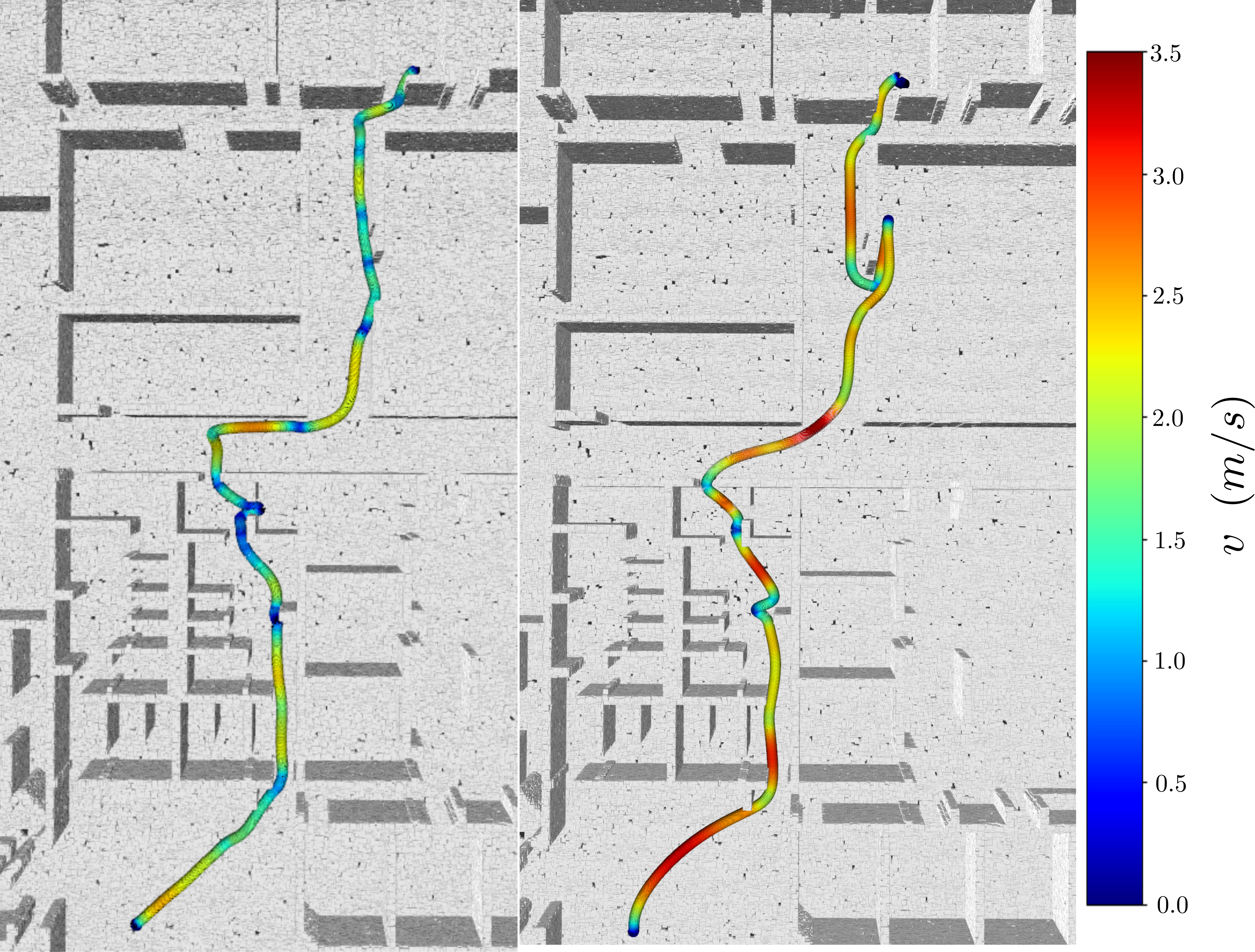}
	\captionof{figure}{Velocity profile in the office simulation. On the left the results of \cite{tordesillas2018real} and on the right FASTER.}
	\label{fig:office_comparison_vel}
\end{minipage}
\begin{minipage}{\columnwidth}
 	\vspace*{.2in}
 	\renewcommand\thetable{IV}%
	\captionof{table}{\label{tab:table_office} Comparison between \cite{tordesillas2018real} and FASTER of flight distances and times in an office simulation.}
	\centering 
	\begin{tabular}{p{2cm} >{\centering\arraybackslash}p{2cm} >{\raggedleft\arraybackslash}p{1.75cm} >{\raggedleft\arraybackslash}p{1.75cm}  }
		\hline
		\hline
		\multicolumn{1}{l}{\textbf{Method}}               & \textbf{Distance (m)}  & \textbf{Time (s)} \\ %
		Multi-Fid.         & \textbf{41.5}                   & 29.73             \\ 
		\textbf{FASTER}                      & 43.9         & \textbf{20.94}    \\ \hline
		\multicolumn{1}{l}{\hspace*{-.8em} \rule{0pt}{10pt} \textbf{Improvement (\%)}}    & \textbf{-5.8}              & \textbf{29.6}                                    \\ \hline \hline
	\end{tabular}
\end{minipage}
\end{figure}

We evaluate the performance of the proposed algorithm in different simulated scenarios. The simulator uses C++ custom code for the dynamics engine, integrating the nonlinear differential equations of the UAV using the Runge-Kutta method. Gazebo \cite{koenig2004design} is used to simulate perception data in the form of a depth map. In all these simulations, the depth camera has a horizontal FOV of $90^{\circ}$. The sensing range is $5$ m for the first simulation (corner environment), and $10$ m for the rest. 

We now test FASTER in 10 random forest environments with an obstacle density of $0.1$~obstacles/m$^2$ (see Fig.~\ref{fig:forest_and_bugtrap}), and compare the flight distances achieved against the following seven approaches:
\begin{itemize}
	\item Incremental approach (no goal selection).
	\item Random goal selection.
    \item Optimistic RRT$^\star$ (unknown space = free).
    \item Conservative RRT$^\star$ (unknown space = occupied).
	\item ``Next-best-view'' planner (NBVP) \cite{bircher2016receding}.
	\item  Safe Local Exploration \cite{oleynikova2018safe}.
	\item  Multi-Fidelity \cite{tordesillas2018real}.
\end{itemize}

The first six methods are \add{described in \cite{oleynikova2018safe} and \cite{tordesillas2018real} is our previous algorithm}. The results in Table~\ref{tab:table_forest_distance} highlight that FASTER achieves a $8-51\%$ improvement in the total distance flown. Completion times are compared in Table \ref{tab:table_forest_time} to \cite{tordesillas2018real} (time values are not available for all other algorithms in Table~\ref{tab:table_forest_distance}). FASTER achieves an improvement of $52\%$ in the completion time. The dynamic constraints imposed for the results of this table are (per axis) $v_{\text{max}}=5$~m/s, $a_{\text{max}}= 5$~m/s$^2$, and $j_{\text{max}}= 8$~m/s$^3$. The velocity profiles obtained for one random forest simulation are shown in Fig.~\ref{fig:forest_comparison_vel}.

We also test FASTER using the bugtrap environment shown in Fig.~\ref{fig:forest_and_bugtrap}, and obtain the results that appear on Table \ref{tab:table_bugtrap}. Both algorithms have a similar total distance, but FASTER achieves an improvement of $63\%$ on the total flight time. For both cases, the dynamic constraints imposed are $v_{\text{max}}=10$~m/s, $a_{\text{max}}= 10$~m/s$^2$, and $j_{\text{max}}= 40$~m/s$^3$. The velocity profile achieved along the trajectory can be seen in Fig.~\ref{fig:bugtrap_comparison_vel}. 

Finally, we test FASTER in an office environment, obtaining the velocity profile shown in Fig \ref{fig:office_comparison_vel} and the distances and flight times shown in Table \ref{tab:table_office}. In this case, the distance flown by FASTER was slightly longer than the one by \cite{tordesillas2018real} (note that FASTER entered one of the last rooms, and then turned back), but even with this extra distance, it achieved a $29.6\%$ improvement on the flight time. The dynamic constraints used for the office simulation are $v_{\text{max}}=3$~m/s, $a_{\text{max}}= 6$~m/s$^2$ and $j_{\text{max}}= 35$~m/s$^3$.

\begin{figure}[t]
	\centering
	\includegraphics[width=1\columnwidth]{./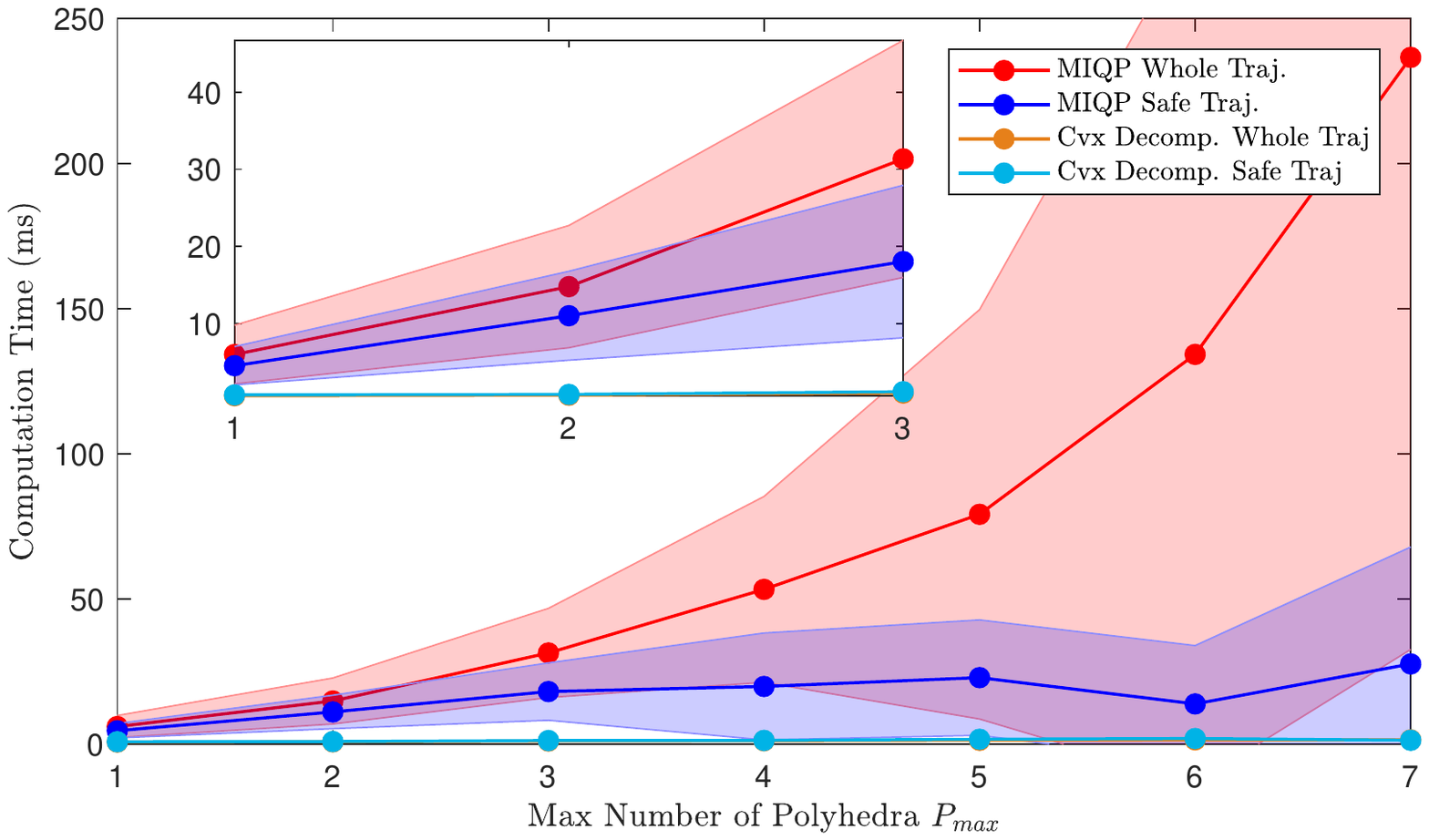}
	\caption{Timing breakdown for the MIQP and Convex Decomposition of the Whole Trajectory and the Safe Trajectory as a function of the maximum number of polyhedra $P_{\text{max}}$ \add{for the forest simulation}. Note that the times for the MIQPs include all the trials until convergence (with different factors $f$) in each replanning step. The shaded area is the 1-$\sigma$ interval ($\sigma$ is the standard deviation).}
	\label{fig:timing_all}
	\vspace*{.1in}
	\centering
	\includegraphics[width=\columnwidth]{./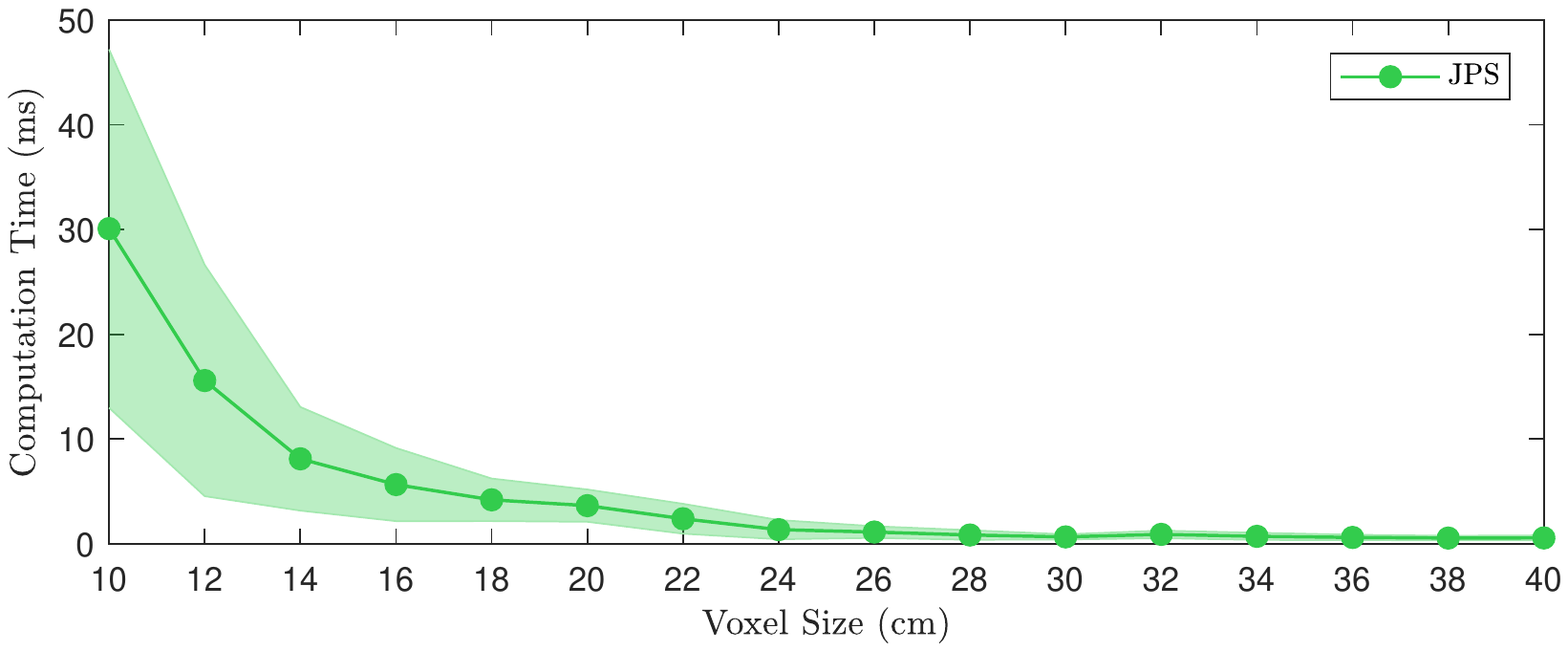}
	\caption{Runtimes of JPS as a function of the voxel size. The shaded area is the 1-$\sigma$ interval \add{($\sigma$ is the standard deviation). These results are for the forest simulation using} a sliding map of size $20$~m~$\times$~$20$~m. }
	\label{fig:timing_jps}

\end{figure} 

\begin{figure*}[]
	\centering
	\newcommand{\TAs}{\textbf{TAs}}
	\newcommand{\TA}{\textbf{TA}}
	\newcommand{\IA}{\textbf{IA}}
	\includegraphics[width=\textwidth]{./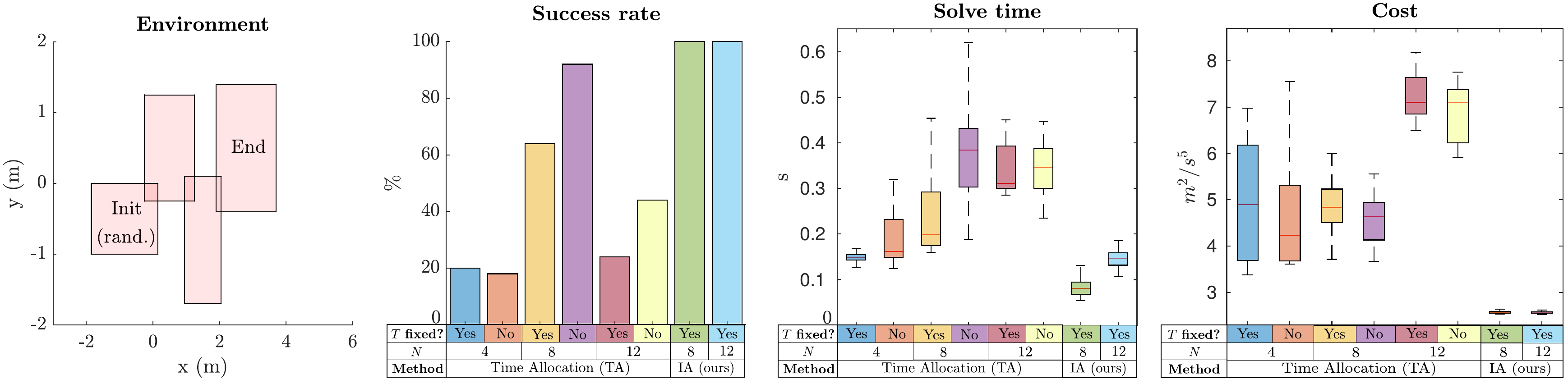}
\caption{Time (\TA{}) vs. Interval (\IA{}) allocation for different number intervals $N$ and different constraints on the total time of the trajectory $T$ (free vs. fixed). In all the \TA{} methods, there are $N/4$ intervals per polyhedron, where $N$ is the total number of intervals.
\IA{} has a fixed time allocation, and uses binary variables to optimize the allocation of the $N$ intervals. The plot on the left shows the 2-D projection of the 3-D flight corridor used in the experiments. The initial position is chosen randomly in the first polyhedron, and the end position is fixed inside the fourth polyhedron. The total cost in these experiments is computed as \costplots{}. For every method, a total of 50 runs are performed, and only the successful runs were taken into account for the costs and solve times. }
\label{fig:allocations}
\end{figure*} 

\begin{figure*}[]
	\centering
	\includegraphics[width=\textwidth]{./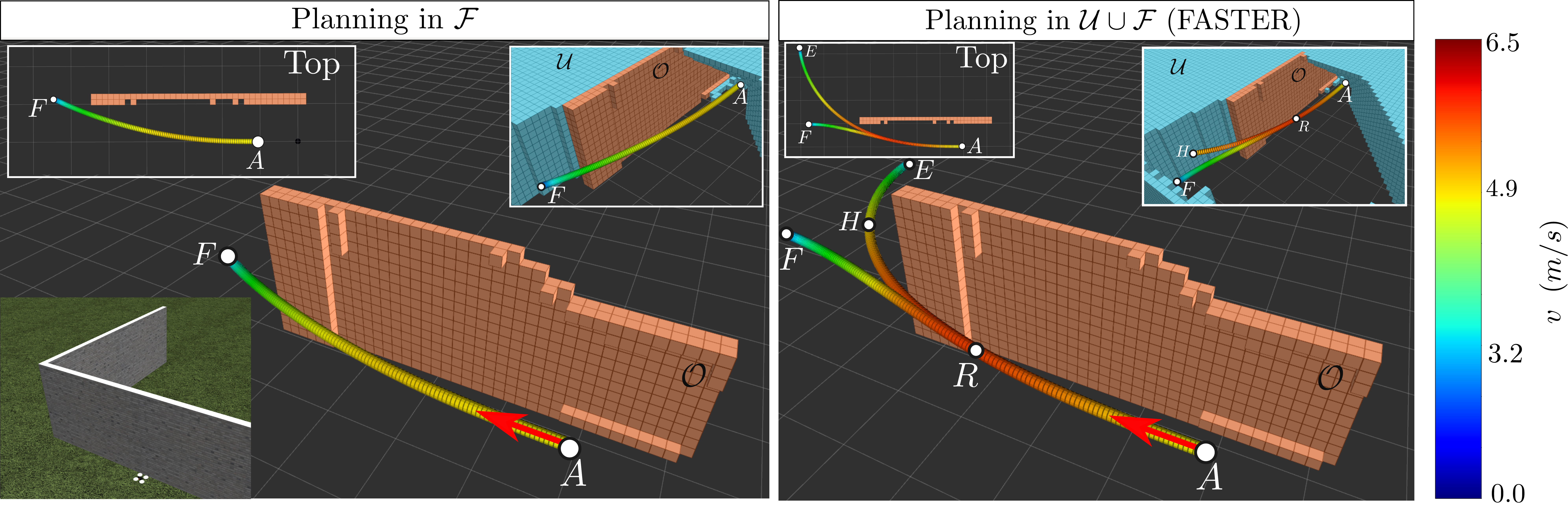}
	\caption{Trajectories obtained when planning only in $\mathcal{F}$ (left) and when planning in $\mathcal{F} \cup \mathcal{U}$ (FASTER, right). The velocity at $A$ is $4.8$ m/s. FASTER achieves a velocity of $6.02$ m/s in the segment $A\rightarrow R$ (segment that will actually be flown by the UAV), while the other planner achieves a velocity of $5.06$ m/s. The ground grid is $1$~m~$\times$~$1$~m.}
	\label{fig:comparison_faster_onlyfree}
\end{figure*} 

\definecolor{DarkGreen}{rgb}{0,0.5,0}
\newcommand{\Yesc}{\textcolor{DarkGreen}{\textbf{Yes}}}
\newcommand{\Noc}{\textcolor{red}{\textbf{No}}}

\begin{figure}
  \centering
  \includegraphics[width=1\columnwidth]{./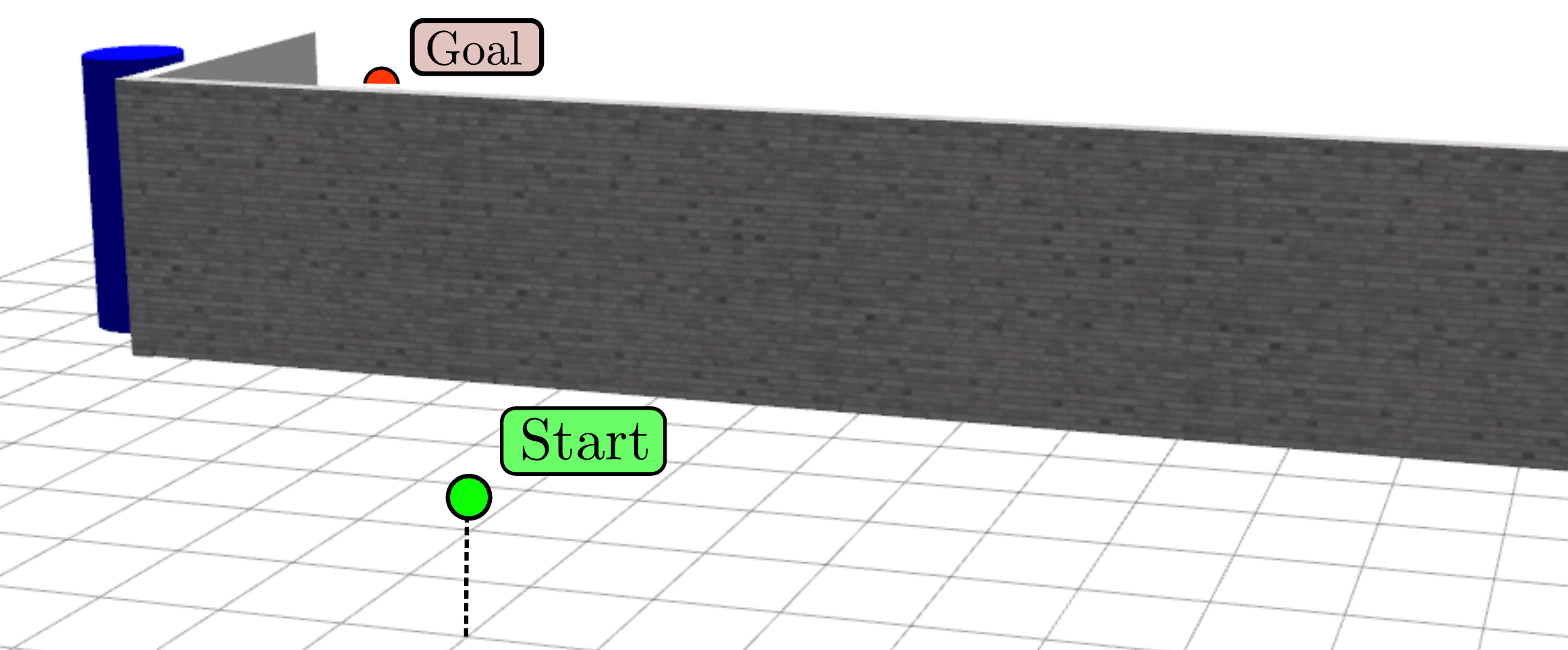}
  \caption{\add{Environment with an obstacle behind the corner.}} \label{fig:environment_cylinder_corner}
  \captionof{table}{\add{Safety with and without the Safe Trajectory. The ratios represent the successful runs (i.e., without crashes), for a total of 5 runs.}}\label{tab:with_without_safe}
    \add{\begin{tabular}{cccc}
    \hline 
    ${v_{\text{max}}}$\qquad& \textbf{\qquad FASTER\qquad} & \textbf{ No safe traj.\qquad}\tabularnewline
    \hline
    \hline 
    \textbf{$\boldsymbol{4}$  m/s}& \Yesc{} (5/5) & \Yesc{} (5/5) \tabularnewline
    \textbf{$\boldsymbol{6}$  m/s}& \Yesc{} (5/5) & \Noc{} (2/5) \tabularnewline
    \textbf{$\boldsymbol{8}$  m/s}& \Yesc{} (5/5) & \Noc{} (0/5) \tabularnewline
    \hline
    \vspace{1mm}
\end{tabular}
}
\end{figure}

The timing breakdown of Alg.~\ref{algo: myalgorithm_iros} as a function of the maximum number of polyhedra $P_{\text{max}}$ is shown in Fig.~\ref{fig:timing_all}. The number of intervals $N$ was 10 for the Whole Trajectory and 7 for the Safe Trajectory. Note that the runtime for the MIQP of the Safe Trajectory is approximately constant as a function of $P_{\text{max}}$ \add{because the Safe Trajectory is planned only in $\mathcal{F}$, and therefore, most of the time, $P < P_{\text{max}}$. For the simulation and hardware experiments presented here, $P_{\text{max}} = 2-4$ was used. Fig.~\ref{fig:timing_jps} shows the runtimes for JPS as a function of the voxel size of the map, which are always $<10$ ms for voxel sizes $\ge 14$ cm}. All these timing breakdowns were measured using an Intel Core i7-7700HQ 2.8GHz Processor.

\subsection{Time vs. Interval allocation}\label{subsec:timeVsIntervalAlloc}
\newcommand{\TA}{\textbf{TA}}
\newcommand{\IA}{\textbf{IA}}
As explained in Sec.~\ref{sec:algorithm}, FASTER optimizes the interval allocation using binary variables, while fixing in each optimization the time allocated per interval. Another possible option would be to optimize the time allocation, while fixing the interval allocation. To see the advantages and disadvantages of each option, we compare the following two approaches: 
\begin{itemize}  
 \item \TA: \textbf{T}ime \textbf{A}llocation is optimized and there are $N/P$ intervals per polyhedron. We test both the case when the total time of the trajectory $T$ is free and when it is fixed at $12.5$ s.
 \item \IA \textbf{ (ours)}: \textbf{I}nterval \textbf{A}llocation is optimized and all the intervals have the same fixed allocated time. \IA{} uses binary variables to optimize the allocation of the $N$ intervals. $T$ is fixed at $12.5$ s and the time allocated per interval is $12.5/N$.
\end{itemize}
We use an environment 
whose free space is defined by 4 overlapping polyhedra (i.e., $P=4$, see Fig.~\ref{fig:allocations}). The final state is a stop condition in the centroid of the last polyhedron, while the initial state is a stop condition in a random position of the first polyhedron, for a total of 50 runs. 
Both \IA{} and \TA{} methods use a weighted sum of the control effort and the total time as the total cost: \costplots{}. Note that the second term of this cost is constant for the methods in which $T$ is not a decision variable. The dynamic constraints imposed are $v_{\text{max}}=2$~m/s, $a_{\text{max}}= 20$~m/s$^2$, and $j_{\text{max}}= 50$~m/s$^3$. The solver used for the (nonconvex) problems of \TA{} is \emph{fmincon} \cite{matlabOptToolbox}, while \emph{Gurobi} \cite{gurobi} is used for the MIQP of \IA{} (both interfaced through \emph{YALMIP} \cite{Lofberg2004,Lofberg2009}).
The results in Fig.~\ref{fig:allocations} show that \IA{} is able to succeed in all of the runs, and it obtains smaller total costs and computation times.
The \TA{} methods achieve lower success rates, though these tend to increase when $T$ is not fixed and $N>P$.  
All these results support the choice of optimizing the interval allocation (instead of the time allocation) that FASTER makes. Note also that, as explained in Sec.~\ref{subsec:local_planner}, FASTER runs on top of this a line search to choose the time allocated per interval, see Fig.~\ref{fig:dynamic_adaptation_factor}.

\add{
  \subsection{Role of the Safe Trajectory}\label{sec:roleSafeTraj}
}
\subsubsection{\add{Speed achieved}}
We first test FASTER in a simple environment and, for the same replanning step, we compare the velocities of the trajectory found by FASTER (that plans in $\mathcal{U}\cup \mathcal{F}$) with the ones of the trajectory found by a planner that plans only in $\mathcal{F}$. The environment is shown in Fig.~\ref{fig:comparison_faster_onlyfree}, and consists of a corner, with  the goal on the other side of the wall, so that the UAV has to turn the corner. The initial velocity at $A$ is $4.8$ m/s, and the dynamic constraints imposed are $v_{\text{max}}=6.5$~m/s, $a_{\text{max}}= 6$~m/s$^2$, and $j_{\text{max}}= 20$~m/s$^3$. FASTER achieves a velocity of $6.02$ m/s in the segment $A\rightarrow R$ (segment that will actually be flown by the UAV), while planning only in $\mathcal{F}$ achieves a velocity of $5.06$ m/s. $R\rightarrow F$ is the Safe Trajectory, and $A\rightarrow R \rightarrow F$ is the Committed Trajectory. Safety is guaranteed by both planners. 

\subsubsection{\add{Safety}}
\add{We now evaluate what happens if the UAV does not compute the Safe Trajectory, but instead commits directly to the Whole Trajectory. We test this in the environment shown in Fig.~\ref{fig:environment_cylinder_corner}, which consists of a corner with one obstacle behind it. This environment is especially challenging due to the presence of obstacles just behind the corner, which are not fully visible to the UAV until it turns the corner. The results in Table \ref{tab:with_without_safe}  show that the Safe Trajectory is not strictly necessary when flying at low speeds ($\leq 4$ m/s), but it is crucial to guarantee safety when flying at high speeds ($\ge 6$ m/s). For high speeds, the planner without the Safe Trajectory collides due to the lack of time to replan when suddenly discovering an obstacle that was in the unknown space.
 }

\add{\subsection{Comparison between $\text{Poly}_{\text{Whole}}$ and $\text{Poly}_{\text{Safe}}$}}\label{subsec:comparisonPolys}

\add{For the corner environment explained in Sec. \ref{sec:roleSafeTraj} (which uses 4 polyhedra), the top view and the quantitative comparison of the volumes covered are shown in Fig.~\ref{fig:comparison_volume_corner}. $\text{Poly}_{\text{Whole}}$ covers $145.1\cdot V_{\text{UAV}}$
of unknown space that extends beyond $\text{Poly}_{\text{Safe}}$. Here, $ V_{\text{UAV}}$ is the volume of the drone (a sphere of radius 0.3 m).}

\add{For the forest and office simulations (which use 2 polyhedra), the comparison of the volumes is shown in Fig.~\ref{fig:comparison_volume_forest_office} and Table \ref{tab:comparison_volume_forest_office}. 	
Letting $V_{\text{UAV}}$ denote the volume of the sphere that models the UAV, these results show that, on average, $\text{Poly}_{\text{Whole}}$ is, respectively, $250.8\cdot V_{\text{UAV}}$ and $21.9\cdot V_{\text{UAV}}$ larger than  $\text{Poly}_{\text{Safe}}$ in the office and forest simulations.  Moreover, $\text{Poly}_{\text{Safe}}$ does not cover unknown space, while $\text{Poly}_{\text{Whole}}$ is able to cover, respectively, an unknown volume of $122.8\cdot V_{\text{UAV}}$ and $5.5\cdot V_{\text{UAV}}$ in the office and forest simulations. Note also that in the office simulation (which is more cluttered than the forest simulation), $\text{Poly}_{\text{Whole}}$ covers more unknown volume than in the forest simulation.}

\add{The key conclusion of these results is that, even with a relatively small number of polyhedra (2-4), the volume of unknown space covered by $\text{Poly}_{\text{Whole}}$ can be hundreds of times the volume of the UAV, especially in cluttered environments. This makes $\text{Poly}_{\text{Whole}}$ extend much farther than $\text{Poly}_{\text{Safe}}$, which is restricted to stay in $\mathcal{F}$. Hence, the Whole Trajectory will benefit from a longer planning horizon, leading to a higher nominal speed in the segment $A\rightarrow R$ of the Whole Trajectory used in the Committed Trajectory.}

\begin{figure}
	\centering
	
	\includegraphics[width=1\columnwidth]{./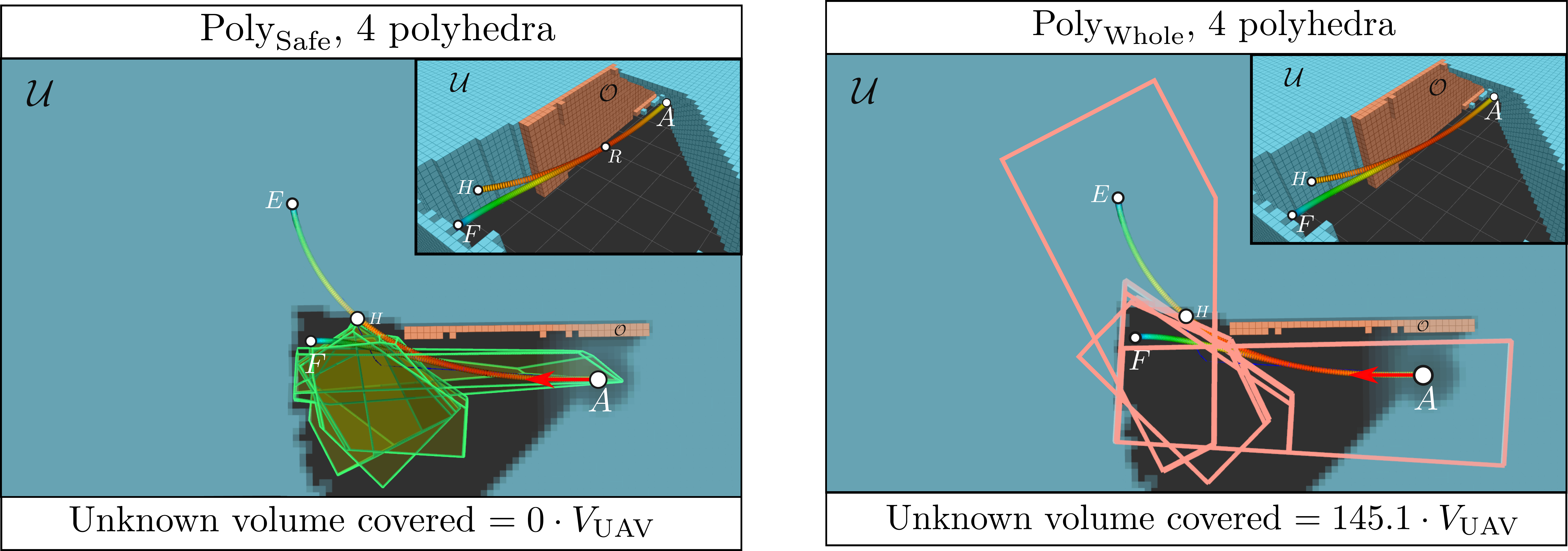}
	\caption{\add{Comparison of the unknown volume covered by $\text{Poly}_{\text{Safe}}$ and $\text{Poly}_{\text{Whole}}$ in the corner environment. As $\text{Poly}_{\text{Safe}}\subset\mathcal{F}$, it does not cover any unknown volume. However, $\text{Poly}_{\text{Whole}}\subset \mathcal{U}\cup\mathcal{F}$, and the total unknown volume covered is $145.1\cdot V_{\text{UAV}}$, where $V_{\text{UAV}}$ is the volume of a sphere with radius $r=0.3$ m that models the UAV. This makes optimization \ref{eq:MIQP} operate in a completely different space when using  $\text{Poly}_{\text{Safe}}$ than when using $\text{Poly}_{\text{Whole}}$.}}
	\label{fig:comparison_volume_corner}
\end{figure} 

\definecolor{wholeColor}{RGB}{255,198,191}
\definecolor{safeColor}{RGB}{190,254,207}
\begin{figure}
	\centering
	\includegraphics[width=1\columnwidth]{./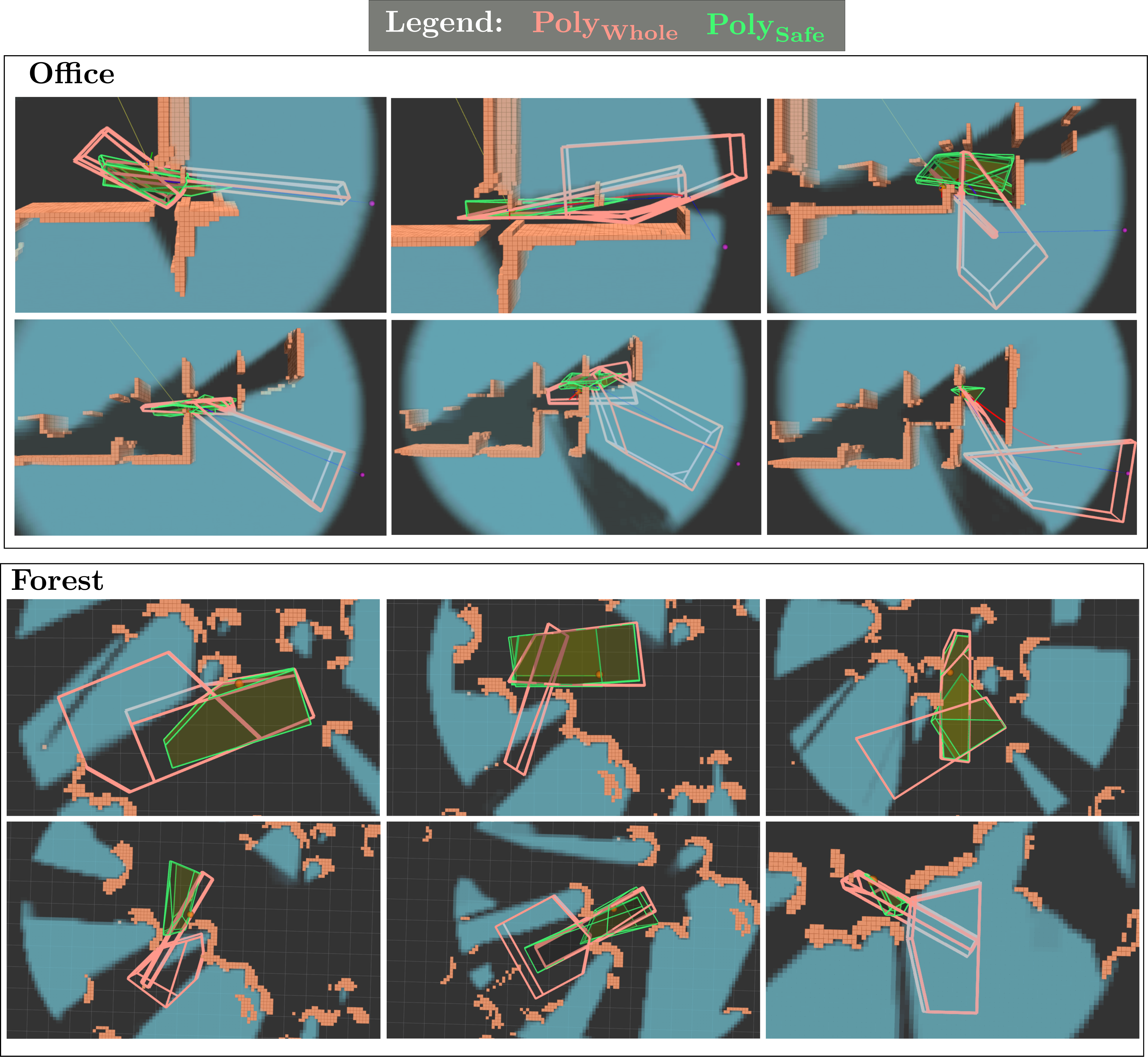}
\caption{\add{Qualitative comparison of the volumes covered by $\text{Poly}_{\text{Whole}}$ and $\text{Poly}_{\text{Safe}}$ in the office and forest simulations.}}
	\label{fig:comparison_volume_forest_office}
	
	\captionof{table}{\add{Quantitative comparison of the volumes covered by $\text{Poly}_{\text{Whole}}$ and $\text{Poly}_{\text{Safe}}$ in the forest and office  simulations. $V_{\text{UAV}}$ denotes the volume of the UAV, which is modeled as a sphere.}}\label{tab:comparison_volume_forest_office}
	\noindent\resizebox{\columnwidth}{!}{%
		\add{\begin{tabular}{|c|c|c|}
		\hline 
		& \textbf{Office simulation} & \textbf{Forest simulation}\tabularnewline
		\hline 
		\hline 
		\textbf{UAV model} & Sphere of $r=0.2$0 m & Sphere of $r=0.42$ m\tabularnewline
		\hline 
		\hline \cellcolor{safeColor}$\boldsymbol{\text{\textbf{vol}}\left(\mathrm{Poly}_{\mathrm{Safe}}\right)}$ & \cellcolor{safeColor}$\left(390.1\pm341.8\right)V_{\text{UAV}}$ & \cellcolor{safeColor}$\left(68.0\pm36.1\right)V_{\text{UAV}}$\tabularnewline
		\hline 
		\cellcolor{wholeColor}$\boldsymbol{\text{\textbf{vol}}\left(\mathrm{Poly}_{\mathrm{Whole}}\right)}$ & \cellcolor{wholeColor}$\left(\boldsymbol{640.9}\pm442.4\right)V_{\text{UAV}}$ & \cellcolor{wholeColor}$\left(\boldsymbol{89.9}\pm38.8\right)V_{\text{UAV}}$\tabularnewline
		\hline 
		\hline \cellcolor{safeColor}$\boldsymbol{\text{\textbf{vol}}\left(\mathrm{Poly}_{\mathrm{Safe}}\cap\mathcal{U}\right)}$ & \cellcolor{safeColor}$0.0\;V_{\text{UAV}}$ & \cellcolor{safeColor}$0.0\;V_{\text{UAV}}$\tabularnewline
		\hline 
		\cellcolor{wholeColor}$\boldsymbol{\text{\textbf{vol}}\left(\mathrm{Poly}_{\mathrm{Whole}}\cap\mathcal{U}\right)}$ & \cellcolor{wholeColor}$\left(\boldsymbol{122.8}\pm184.7\right)V_{\text{UAV}}$ & \cellcolor{wholeColor}$\left(\boldsymbol{5.5}\pm9.8\right)V_{\text{UAV}}$\tabularnewline
		\hline 
\end{tabular}}}
\end{figure} 

\begin{figure}
	\centering
	\includegraphics[width=0.8\columnwidth,trim=20 20 0 20,clip]{./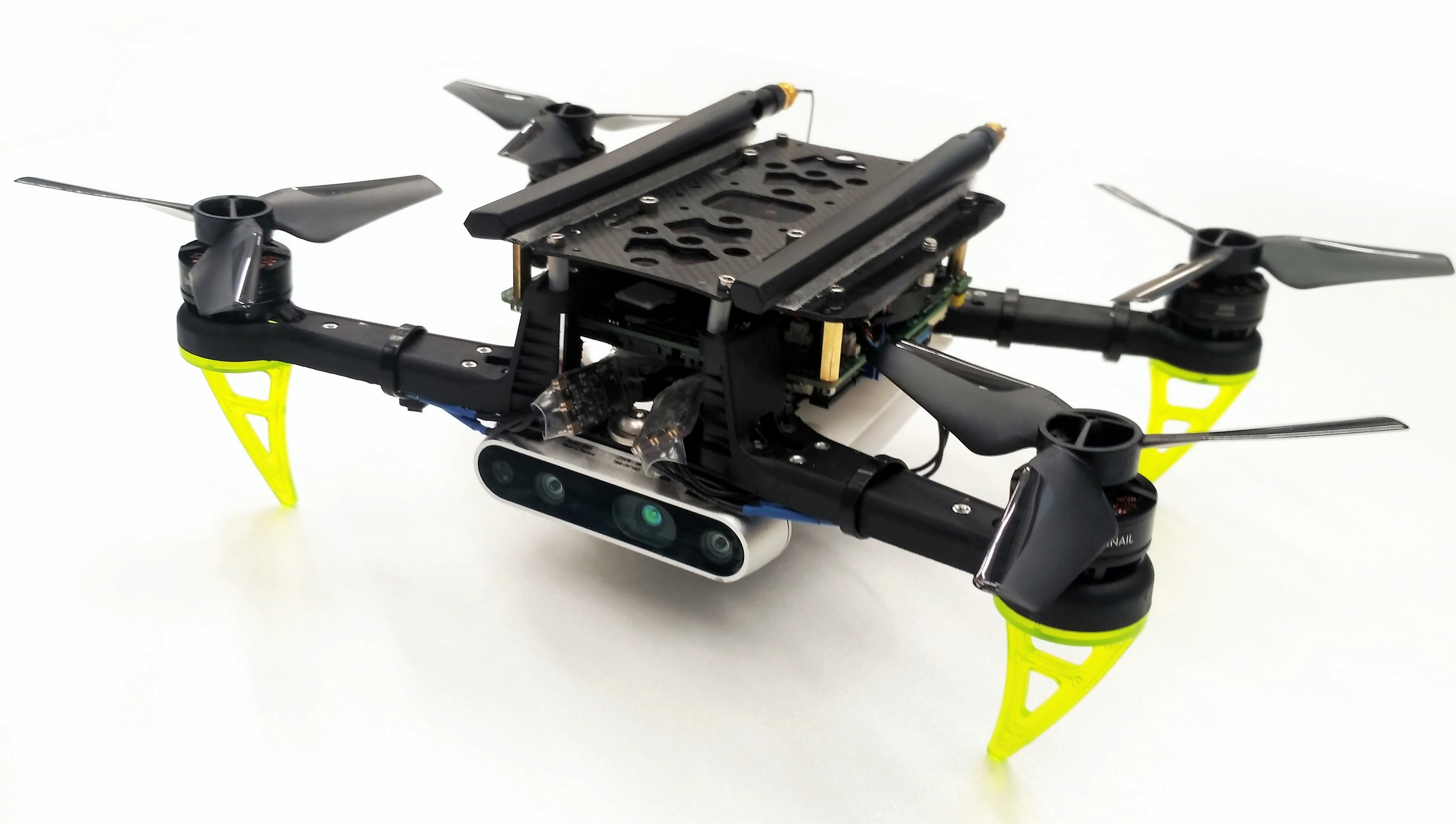}
	\includegraphics[width=0.8\columnwidth,trim=20 20 0 20,clip]{./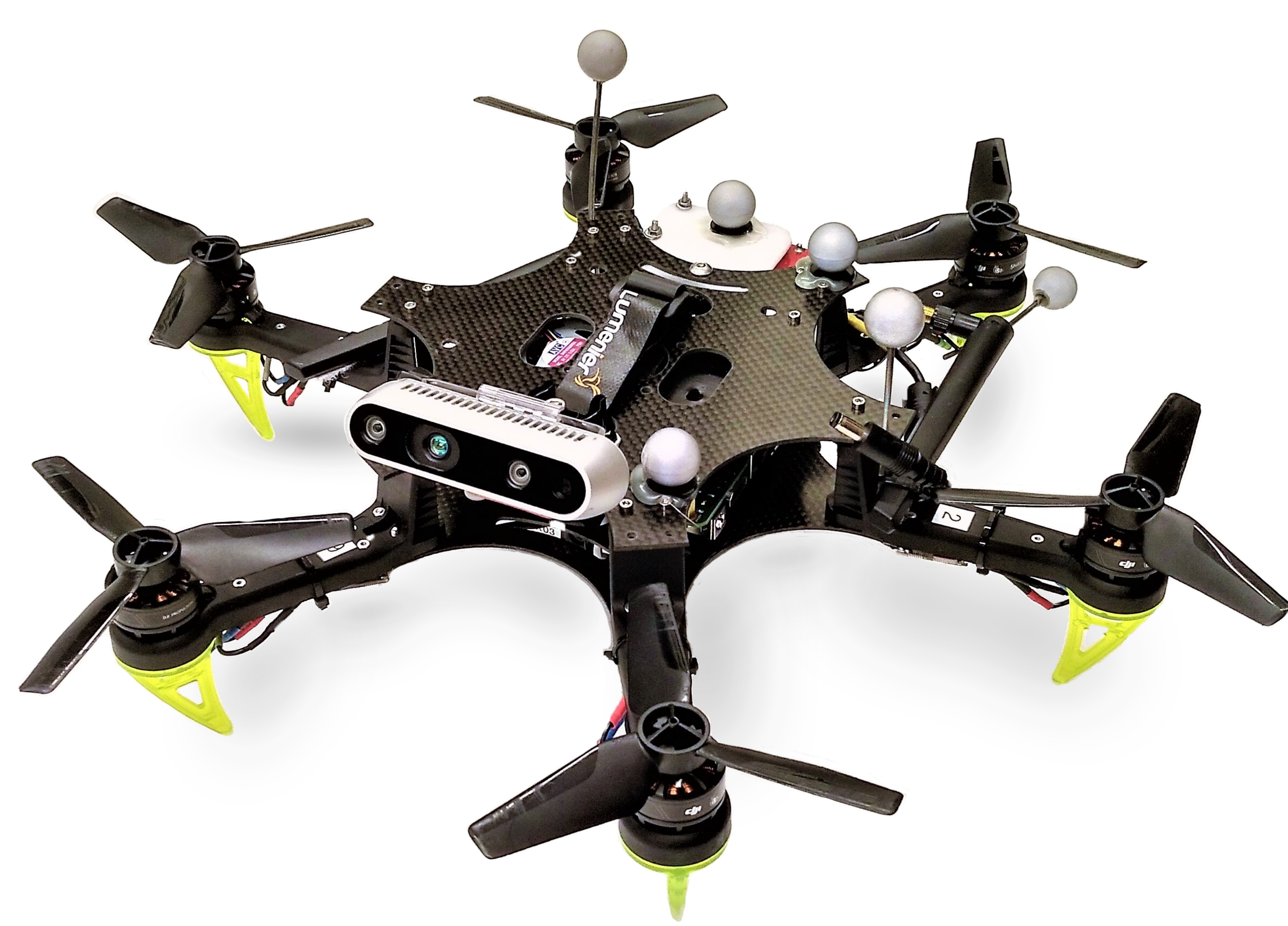}
	\caption{Quadrotor (top) used in the experiments 1-4 and hexarotor (bottom) used in the experiments 5 and 6. Both are equipped with a Qualcomm\textsuperscript{\textregistered} SnapDragon Flight, an Intel\textsuperscript{\textregistered} NUC i7DNK, and an Intel\textsuperscript{\textregistered} RealSense Depth Camera D435.  }
	\label{fig:drone}
\end{figure}

\begin{figure*}[]
	\includegraphics[width=\textwidth]{./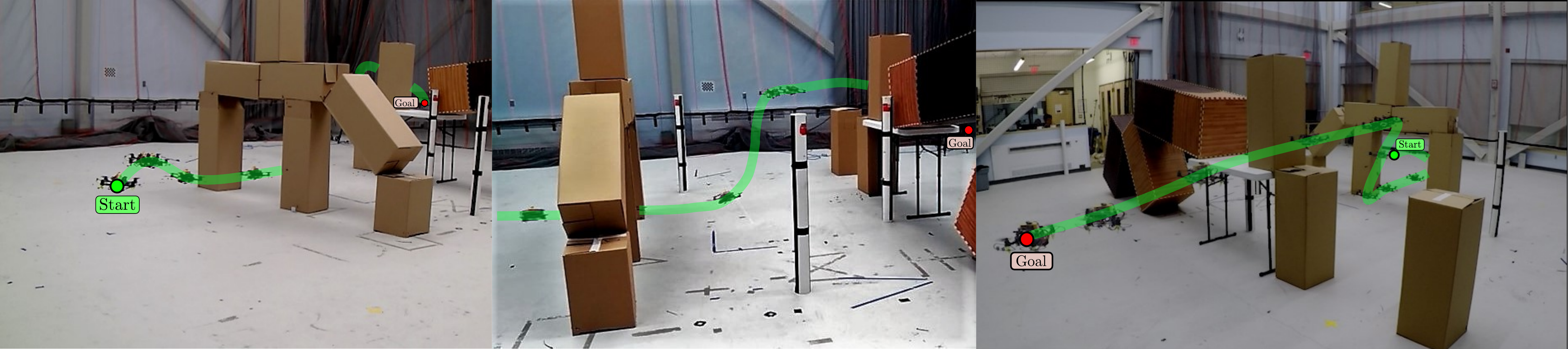}
	\caption[Composite images of Experiment~1]{Composite images of Experiment~1. The UAV must fly from start \tikzcircle[black,fill=green]{2pt} to goal \tikzcircle[black,fill=red]{2pt}. Snapshots shown every 670~ms.}
	\label{fig:exp1}
	\centering
	\includegraphics[width=\textwidth]{./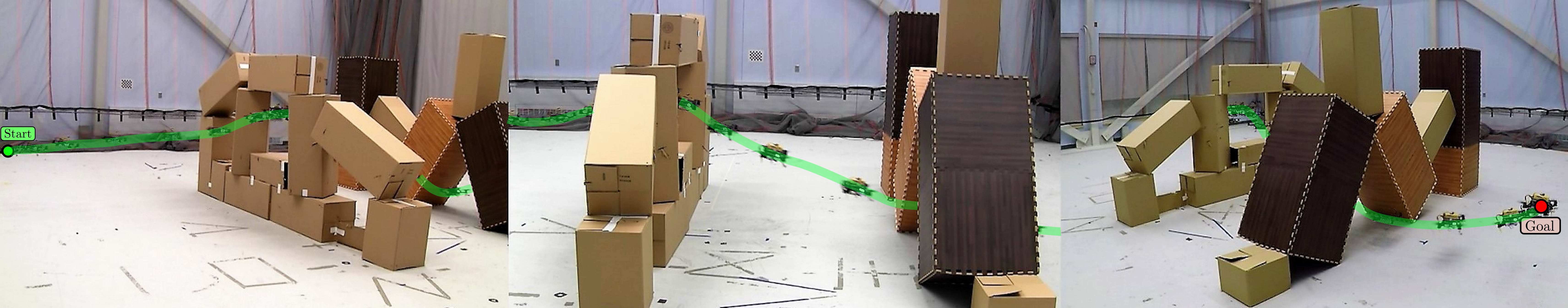}
	\caption[Composite images of Experiment~2]{ Composite image of Experiment 2. The UAV must fly from start \tikzcircle[black,fill=green]{2pt} to goal \tikzcircle[black,fill=red]{2pt}. Snapshots shown every 330~ms.}
	\label{fig:exp2}
	\centering
	\includegraphics[width=\textwidth]{./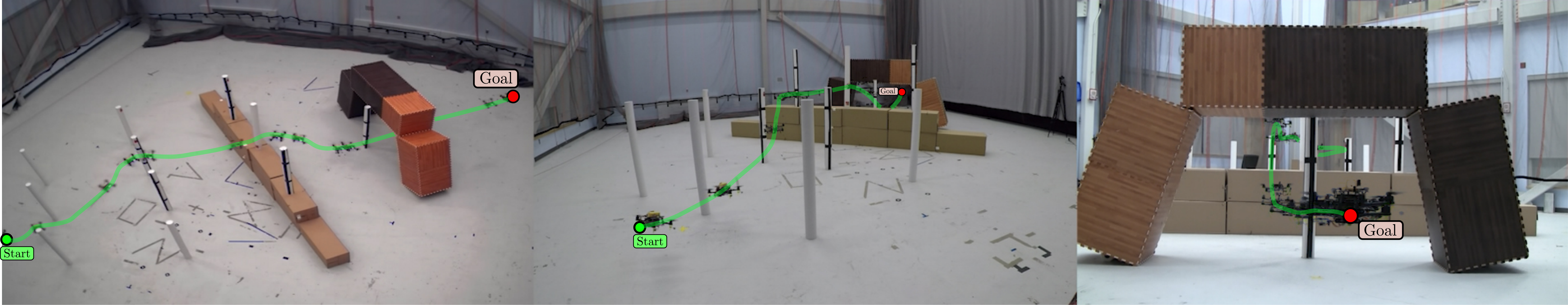}
	\caption[Composite images of Experiment~3]{ Composite image of Experiment 3. The UAV must fly from start \tikzcircle[black,fill=green]{2pt} to goal \tikzcircle[black,fill=red]{2pt}. Snapshots shown every 670~ms.}
	\label{fig:exp3}
	\centering
	\includegraphics[width=\textwidth]{./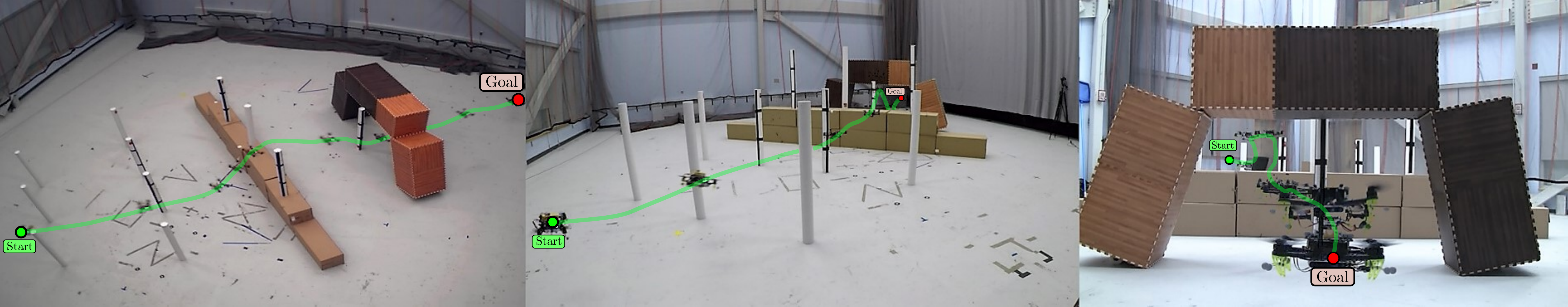}
	\caption[Composite images of Experiment~4]{ Composite image of Experiment 4. The UAV must fly from start \tikzcircle[black,fill=green]{2pt} to goal \tikzcircle[black,fill=red]{2pt}. Snapshots shown every 670~ms.}
	\label{fig:exp4}
	
	\centering
	\includegraphics[width=\textwidth]{./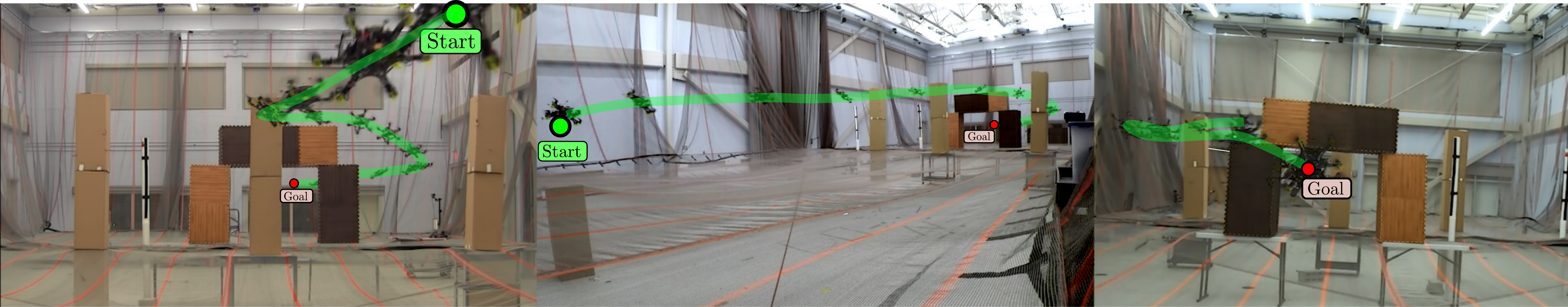}
	\caption[Composite images of Experiment~5]{ Composite image of Experiment 5. The UAV must fly from start \tikzcircle[black,fill=green]{2pt} to goal \tikzcircle[black,fill=red]{2pt}. Snapshots shown every 330~ms.}
	\label{fig:exp5}
	\centering
	\includegraphics[width=\textwidth]{./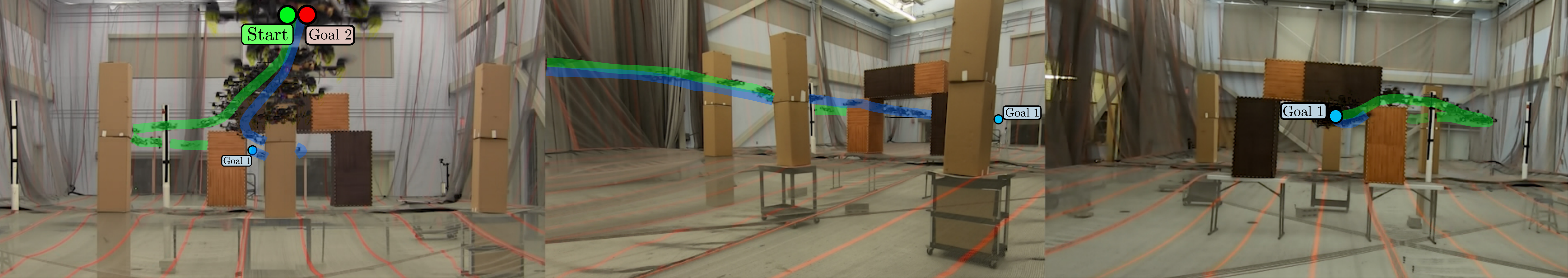}
	\caption[Composite images of Experiment~6]{ Composite image of Experiment 6. The UAV must fly from start \tikzcircle[black,fill=green]{2pt} to goal 1 \tikzcircle[black,fill=LightBlue]{2pt} and then back to goal 2 \tikzcircle[black,fill=red]{2pt}. Snapshots shown every 330~ms.}
	\label{fig:exp6}
	
\end{figure*} 

\begin{figure*}[t]
	\centering
	\includegraphics[width=\textwidth]{./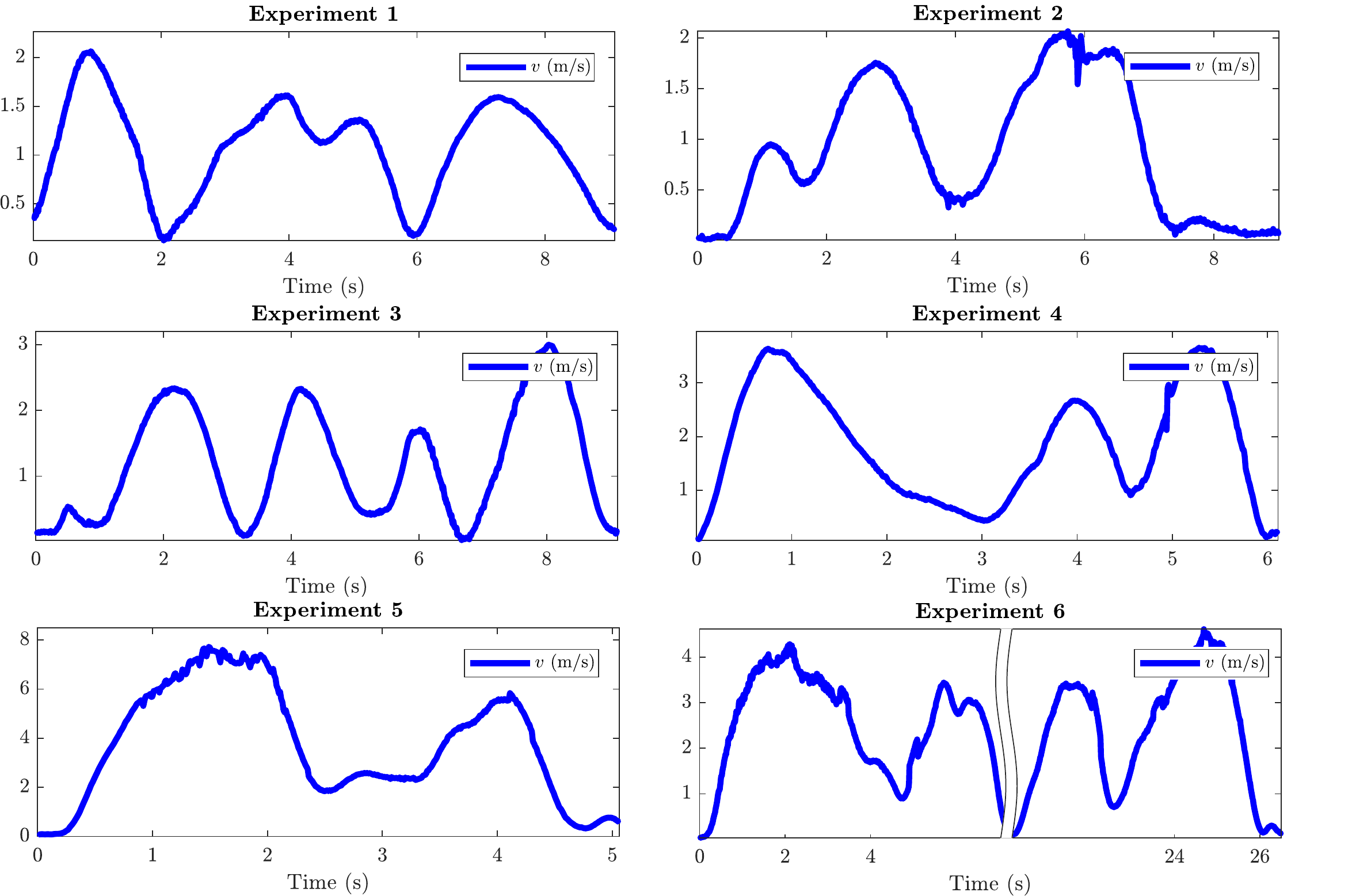}
	\caption[]{Velocity plots of all the UAV hardware experiments. \add{This velocity is the estimated velocity of the UAV, obtained by applying finite differences to the ground truth position measurements of an external motion capture system. This leads to some noisy estimates, especially for the high velocities of experiments 5 and 6. Moreover, these positions measurements are not available when the UAV is passing below an obstacle, which produces also noisy velocity estimates at those points. This happens in experiment 2 at $t=4.0$ s and $t=5.9$ s and in experiment 4 at $t=5.0$ s.}}
	\label{fig:velocity_profiles_hw}
\end{figure*} 

\begin{figure}[]
	\centering
	\includegraphics[width=1\columnwidth]{./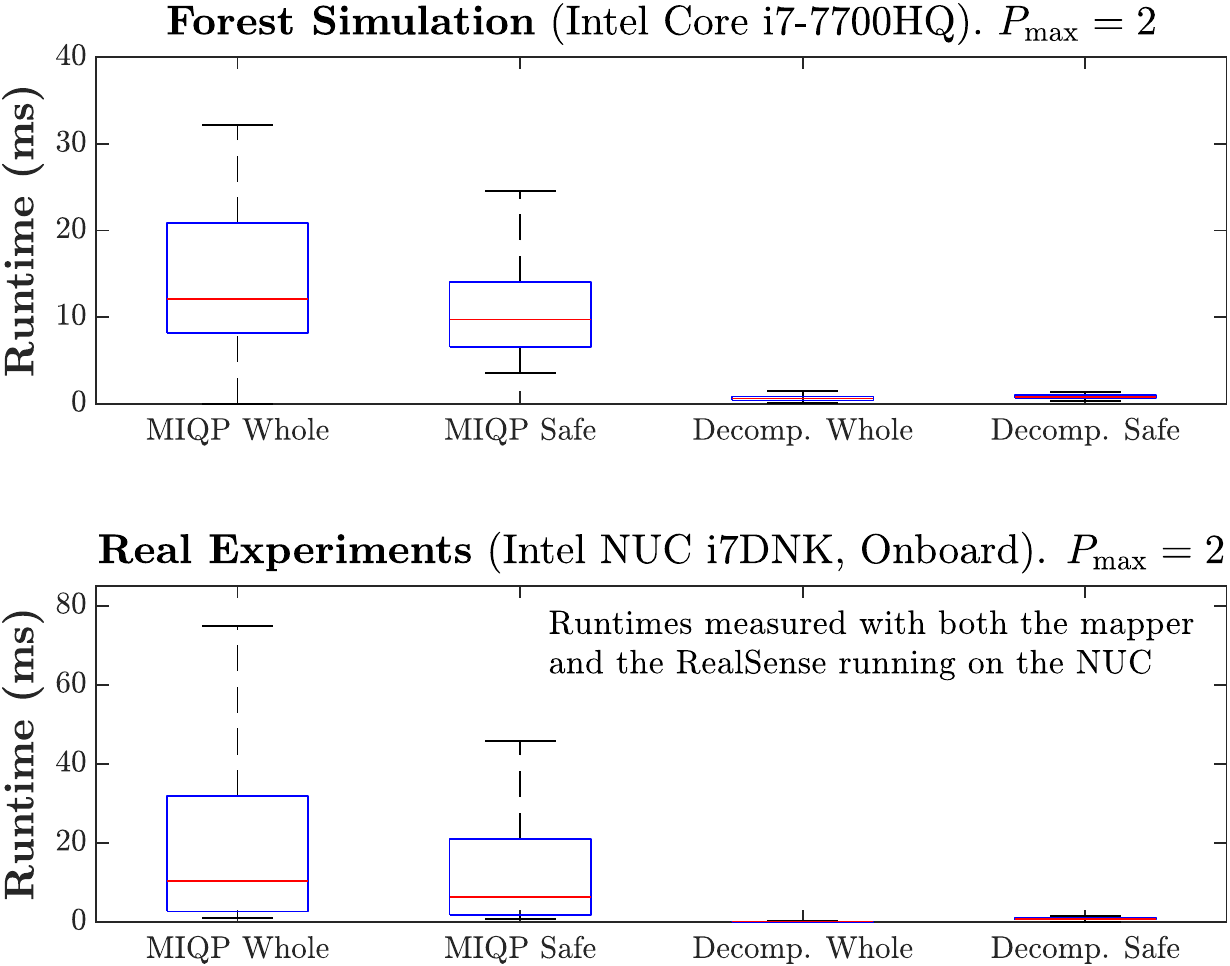}
	\caption[Timing breakdown for the forest simulation and for the hardware experiments]{ Timing breakdown for the forest simulation and for the real hardware experiments. The parameters used are $P_{\text{max}}=2$, $N=10$ for the Whole Trajectory, and $N=7$ for the Safe Trajectory.}
	\label{fig:timing_real_and_sim}
\end{figure} 

\add{\section{Hardware results}}\label{sec:HWresults}

The UAVs used in the hardware experiments are shown in Fig.~\ref{fig:drone}. A quadrotor was used in the experiments 1-4, and a hexarotor was used in the experiments 5 and 6. In both UAVs, the perception runs on the Intel\textsuperscript{\textregistered} RealSense, the mapper and planner run on the Intel\textsuperscript{\textregistered} NUC, and the control runs on the Qualcomm\textsuperscript{\textregistered} SnapDragon Flight. 

\begin{figure*}[]

	\centering
	\includegraphics[width=\textwidth]{./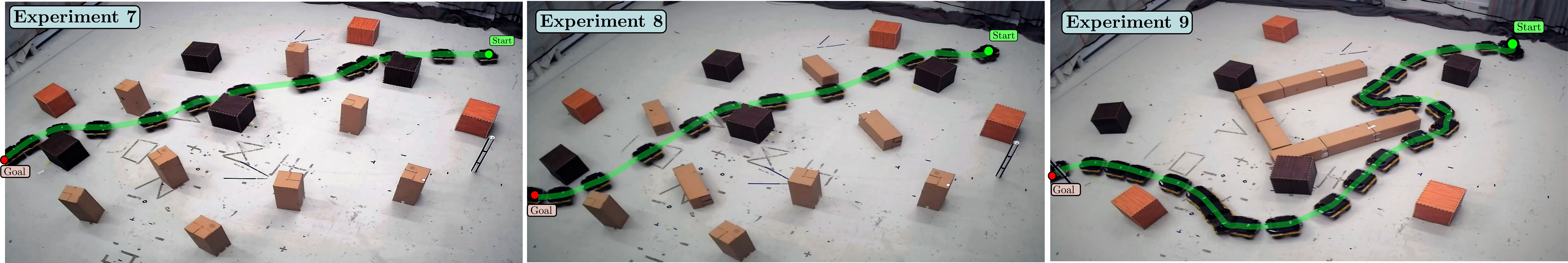}
	\caption{ Composite images of Experiments~7, 8 and 9. The ground robot must go from start \tikzcircle[black,fill=green]{2pt} to goal \tikzcircle[black,fill=red]{2pt}. Snapshots shown every 670~ms. To show the ability of FASTER to get out from bugtraps, only points in the depth image closer than 3 m were used to build the map in experiment 9.}
	\label{fig:exp789}
	
	\centering
	\includegraphics[width=\textwidth]{./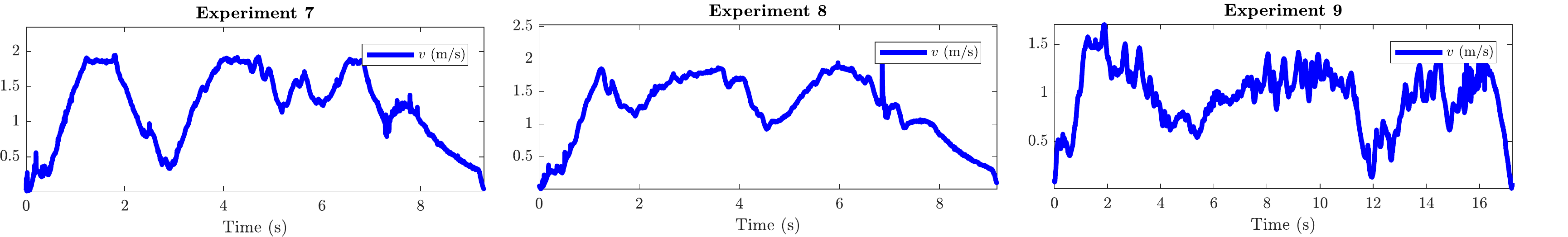}
	\caption{Velocity plots of the experiments 7, 8, and 9.}
	\label{fig:velocities_experiment789}
	
\end{figure*} 

The six hardware experiments done are shown in Figs.~\ref{fig:exp1}-- %
\ref{fig:exp6}. The corresponding velocity profiles are shown in Fig.~\ref{fig:velocity_profiles_hw}. The maximum speed achieved was $7.8$ m/s, in Experiment 5 (Fig.~\ref{fig:exp5}).
The first and second experiments (Fig.~\ref{fig:exp1} and \ref{fig:exp2}) were done in similar obstacle environments with the same starting point but with different goal locations. In the first experiment (Fig.~\ref{fig:exp1}), the UAV performs a 3-D agile maneuver to avoid the obstacles on the table. In the second experiment (Fig.~\ref{fig:exp2}) the UAV flies through the narrow gap of the cardboard boxes structure, and then flies below the triangle-shaped obstacle. In these two experiments, the maximum speed was $2.1$ m/s.

In the third and fourth experiments (Fig.~\ref{fig:exp3} and \ref{fig:exp4}), the UAV must fly through a space with poles of different heights, and finally below the cardboard boxes structure to reach the goal, achieving a maximum speed of $3.6$ m/s. 
Finally, in the fifth and sixth experiments (Fig.~\ref{fig:exp5} and \ref{fig:exp6}), the UAV is allowed to fly in a much bigger space, and has to avoid some poles and several cardboard boxes structures. In the fifth experiment (Fig.~\ref{fig:exp5}) the UAV achieved a top speed of $7.8$ m/s. In the sixth experiment (Fig.~\ref{fig:exp6}) the UAV was first commanded to go to a goal at the other side of the flight space, and then to come back to the starting position, achieving a top velocity of $4.6$ m/s.

\add{Fig.~\ref{fig:velocity_profiles_hw} shows the estimated velocity of the UAV, obtained by applying finite differences to the ground truth position measurements of an external motion capture system. This leads to some noisy estimates, in particular for the high velocities of experiments 5 and 6. Moreover, these positions measurements are not available when the UAV is passing below an obstacle, which produces also noisy velocity estimates at those points. This happens in experiment 2 at $t=4.0$ s and $t=5.9$ s and in experiment 4 at $t=5.0$ s.}

For $P_{\text{max}}=2$, the boxplots of the runtimes achieved on the forest simulation (measured on an Intel Core i7-7700HQ) and on the hardware experiments (measured on the onboard Intel NUC i7DNK with the mapper and the RealSense also running on it) are shown in Fig.~\ref{fig:timing_real_and_sim}. For the runtimes of the MIQP of the Whole and the Safe Trajectories, the 75\textsuperscript{th} percentile is always below $32$ ms.

\section{Extension to a ground robot} \label{sec:extension}

\add{We now show how, by generating 2-D trajectories instead of 3-D, and changing the controller, FASTER can also be extended for skid-steer robots. To track the trajectory obtained by MADER, we generate the linear and angular velocities using a PD controller based on the derivative of the tangential angle of the trajectory \cite{tangential2019} and the desired position and velocity. The commanded angular velocities of the wheels are then obtained from the desired angular velocities of the wheels using a PID. 
}

\begin{figure}[]
	\centering
	\includegraphics[width=.85\columnwidth,trim=60 190 60 120,clip]{./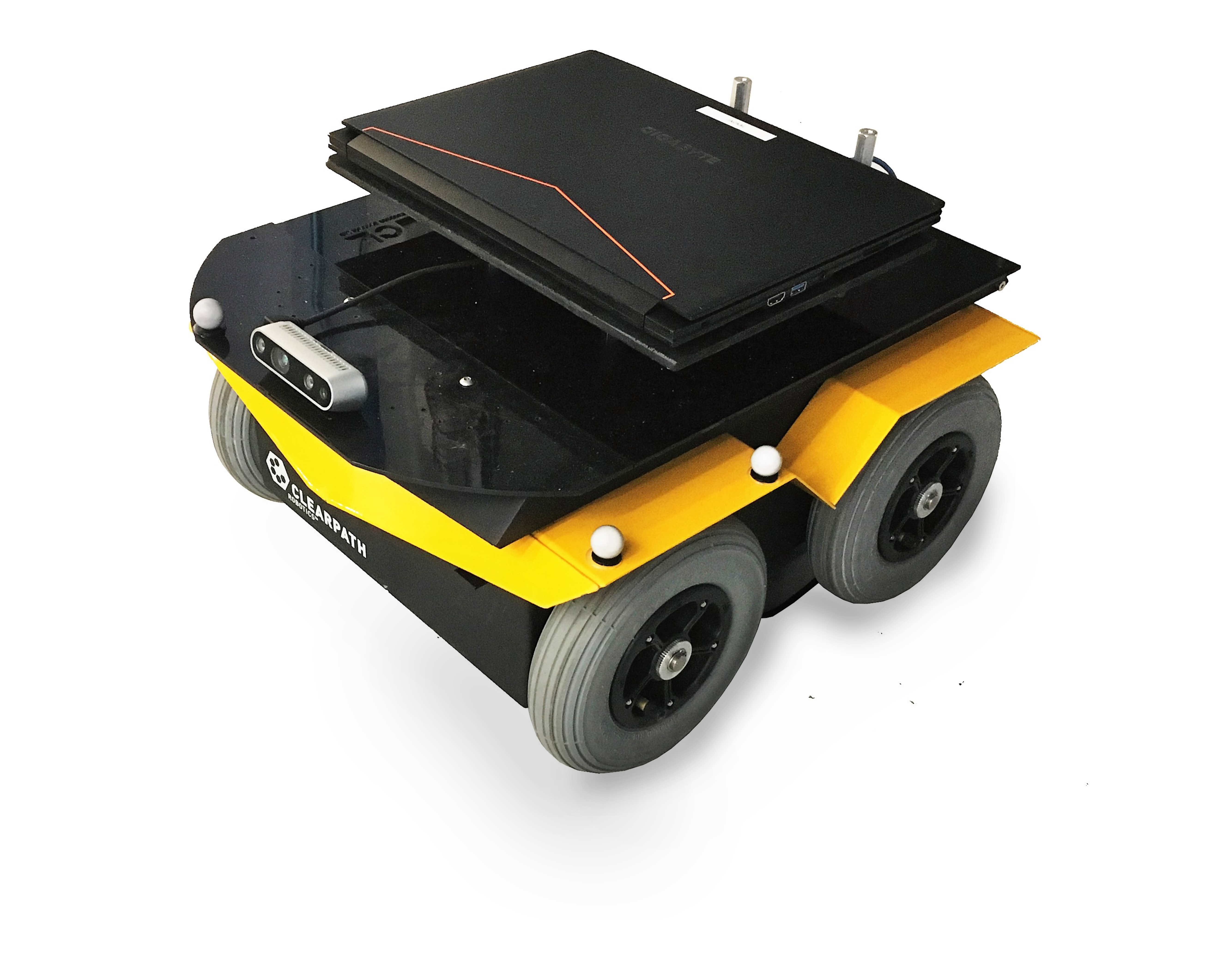}
	\caption{Ground robot used in the experiments. It is equipped with an Intel\textsuperscript{\textregistered} RealSense Depth Camera D435, and an i7-7700HQ laptop.  }
	\label{fig:jackal}
	\vspace{-0.2cm}
\end{figure}

Three different experiments were done with the ground robot (see Figs. \ref{fig:exp789}, \ref{fig:velocities_experiment789}, and \ref{fig:jackal}). An external motion capture system was used to estimate the position and orientation of the robot. Experiments 7 and 8 were done in obstacle environments similar to the random forest. The maximum speeds achieved for the experiments 7 and 8 were $1.95$ m/s and $2.22$ m/s respectively. Note that the maximum speed specified for this ground robot is $\approx 2$ m/s \cite{jackalspecif2019}.

To test the ability of FASTER to reuse the map built, the setup for experiment 9 was a bugtrap environment, and only points in the depth image closer than 3~m were used to build the map. The robot first enters the bugtrap because it does not see the end of it. Once the robot detects that there is no exit at the end of the bugtrap, it turns back, exits the bugtrap, passes through its left and avoids some new obstacles to finally reach the goal. The maximum speed achieved in this experiment was $1.70$ m/s

\section{CONCLUSIONS AND FUTURE WORK}
\label{sec:conclusions_future_work}
This work presented FASTER, a fast and safe planner for agile flights in unknown environments. The key properties of this planner is that it leads to a higher nominal speed than other works by planning both in $\mathcal{U}$ and $\mathcal{F}$ using a convex decomposition, and ensures safety by having always a Safe Trajectory planned in $\mathcal{F}$ at the beginning of every replanning step. FASTER was tested successfully both in simulated and in hardware flights, achieving velocities up to $7.8$ m/s. Finally, we showed how FASTER is also applicable to skid-steer robots, achieving hardware experiments at $2$ m/s. 

\add{Our algorithm has also some limitations: In environments where the planning horizon is not very large (as in all the experiments shown in this article), $2-4$ polyhedra usually suffice, and our algorithm maintains computational tractability. However, for large known worlds (for example if a map of the environment already exists beforehand), a long planning horizon may require more than 4 polyhedra, which, as shown in Fig.~\ref{fig:timing_all}, will increase the computation time. One possible way to address this is to solve the interval allocation only in the polyhedra that are close to the current position of the UAV, and force a predefined interval and time allocation for the polyhedra that are farther in the planning horizon. 
Moreover, we also noticed how important the choice of the point $R$ is: As discussed in Sec. \ref{subsec:complete_algorithm}, if the point $R$ is chosen very close to the unknown space, it may lead to infeasibility of the optimization problem associated with the Safe Trajectory. However, if $R$ is chosen very close to $A$, then the UAV may not have enough time to replan in the next iteration, which will lead to keep executing the previous trajectory, and may eventually decrease the nominal speed of the flight. Nonheuristic ways to solve this tradeoff seems like a promising direction for future work. Further future work includes the relaxation of  the assumption \ref{assumption_theorem}:} we plan to include the uncertainty associated with the map (due to estimation error and/or sensor noise) in the replanning function, and to extend this planner to dynamic environments. We also plan to use onboard estimation algorithms like VIO instead of an external motion capture system for the real hardware experiments. 

Finally, another promising future work is the reduction of the computation times of the time allocation approaches. Experiments in Sec.~\ref{subsec:timeVsIntervalAlloc} use a generic nonconvex solver to optimize the time allocation, which may be inefficient in some situations. Exploitation of the structure of the time allocation problem and/or the use of hierarchical optimization could help to reduce the associated computation times \cite{sun2020fast, tang2020enhancing}. This could potentially avoid the use of binary variables needed for the interval allocation, or allow the optimization of \emph{both} the interval and the time allocation in the trajectory planning problem.

\section*{Acknowledgment}
The authors would like to thank Pablo Tordesillas (ETSAM-UPM) for his help with some figures, to Parker Lusk and Aleix Paris (ACL-MIT) for their help with the hardware, and to Helen Oleynikova (ASL-ETH) for the data of the forest simulation. The authors would also like to thank John Carter and John Ware (CSAIL-MIT) for their help with the mapper used. This work was supported in part by  Defense Advanced Research Projects Agency (DARPA) as part of the Fast Lightweight Autonomy (FLA) program grant number HR0011-15-C-0110. Views expressed here are those of the authors, and do not reflect the official views or policies of the Department of Defense or the U.S. Government. The hardware was supported in part by Boeing  Research  and  Technology. The first author of this article was also financially supported by La Caixa fellowship.

\ifCLASSOPTIONcaptionsoff
  \newpage
\fi

\bibliographystyle{IEEEtran}
\bibliography{ref}

\vspace{4cm}
\begin{IEEEbiography}[{\includegraphics[width=1in,height=1.25in,clip,keepaspectratio]{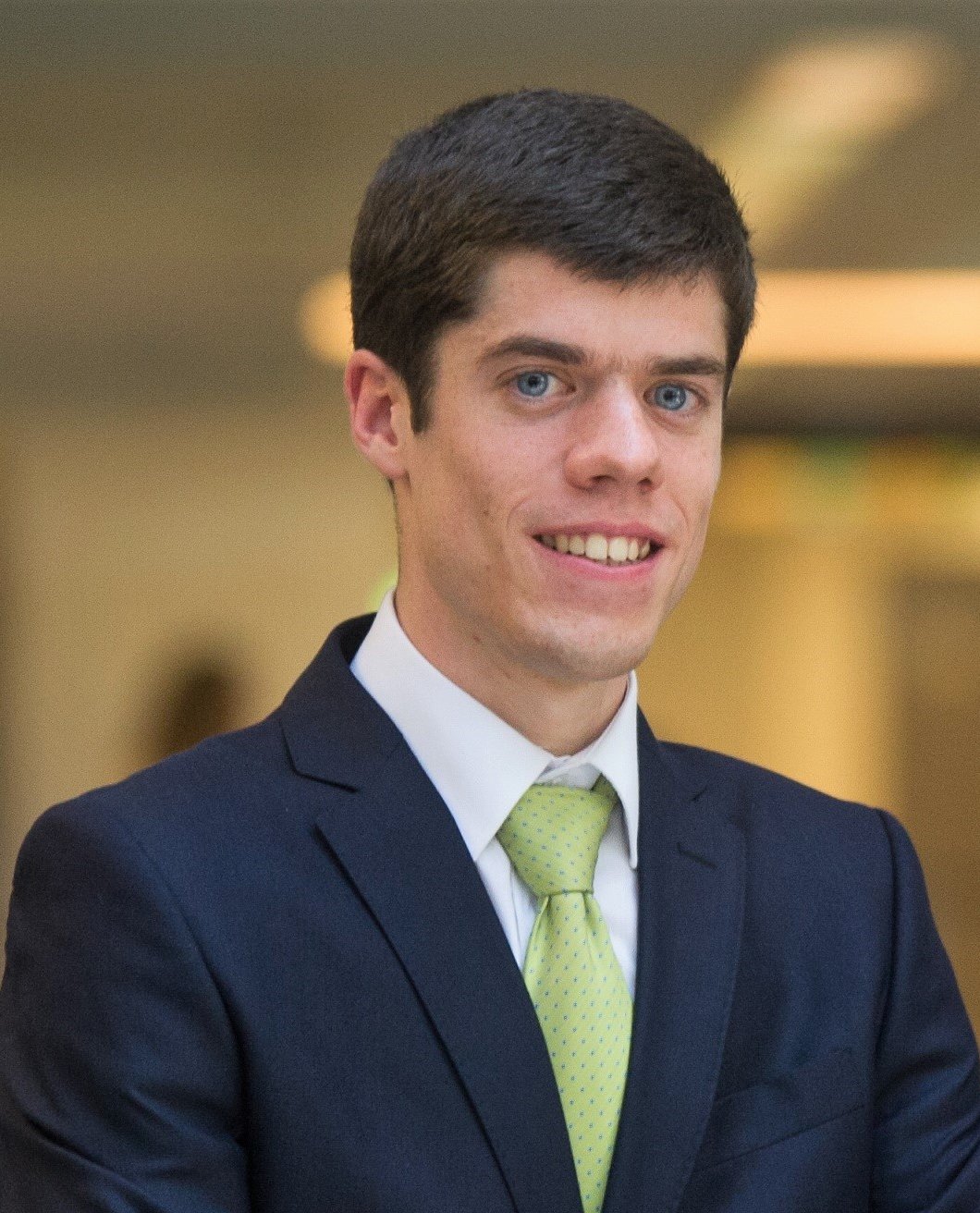}}]{Jesus Tordesillas}
(Student Member, IEEE)
received the B.S. and M.S. degrees in Electronic engineering and Robotics from the Technical University of
Madrid (Spain) in 2016 and 2018 respectively. He then received his M.S. in Aeronautics and Astronautics from MIT in 2019. He is currently pursuing the PhD degree with the Aeronautics and
Astronautics Department, as a member of the Aerospace Controls Laboratory (MIT) under the supervision of Jonathan P. How. 
His research interests include path planning for UAVs in unknown environments and optimization. He held an internship position at the NASA Jet Propulsion Laboratory, working with the Robotic Aerial Mobility Group. His work was a finalist for the Best Paper Award on Search and Rescue Robotics in IROS 2019.
\end{IEEEbiography}

\begin{IEEEbiography}[{\includegraphics[width=2.9in,height=1.25in,clip,keepaspectratio]{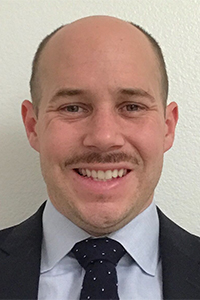}}]{Brett T. Lopez}
(Student Member, IEEE) is a Postdoctoral Scholar at the NASA Jet Propulsion Laboratory in the Robotic Aerial Mobility Group where he leads a team of engineers and researchers designing the next generation of autonomous aerial robots for the DARPA Subterranean Challenge. He obtained his PhD (2019) and SM (2016) from MIT working with Prof. Jonathan How. He obtained his BS (2014) from UCLA where he received the Aerospace Engineering Outstanding Bachelor of Science award. His research establishes performance guarantees for complex autonomous systems through nonlinear/adaptive control theory and optimization.
\end{IEEEbiography}

\begin{IEEEbiography}[{\includegraphics[width=2.9in,height=1.25in,clip,keepaspectratio]{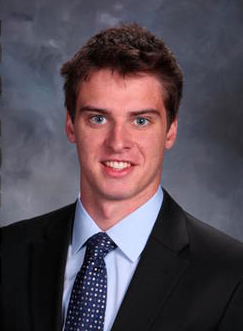}}]{Michael Everett}
(Student Member, IEEE) is a Ph.D. Candidate at the Aerospace Controls Laboratory at MIT. He received the SM degree (2017) and the SB degree (2015) from MIT in Mechanical Engineering. His research addresses fundamental gaps in the connection of machine learning and real mobile robotics. He was an author of works that won the Best Paper Award on Cognitive Robotics at IROS 2019, the Best Student Paper Award and finalist for the Best Paper Award on Cognitive Robotics at IROS 2017, and finalist for the Best MultiRobot Systems Paper Award at ICRA 2017. 
\end{IEEEbiography}

\begin{IEEEbiography}[{\includegraphics[width=2.9in,height=1.25in,clip,keepaspectratio]{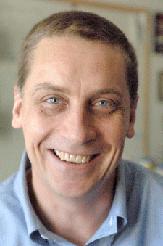}}]{Jonathan P. How}
(Fellow, IEEE) received the
B.A.Sc. degree from the University of Toronto (1987), and the S.M. and Ph.D. degrees in aeronautics and astronautics from MIT (1990 and 1993). Prior to joining MIT in 2000,
he was an Assistant Professor at Stanford University. He is currently the
Richard C. Maclaurin Professor of aeronautics and astronautics at MIT. Some of his awards include the IEEE CSS Distinguished Member Award (2020), AIAA Intelligent Systems Award (2020), 
IROS Best Paper Award on Cognitive Robotics (2019), and the AIAA Best
Paper in Conference Awards (2011, 2012, 2013). 
He was the Editor-in-chief of IEEE Control Systems Magazine (2015--2019), is a Fellow of AIAA, and 
was elected to the National Academy of Engineering in 2021.
\end{IEEEbiography}

\end{document}